\renewcommand*{\backrefalt}[4]{%
    \ifcase #1 \footnotesize{(not cited)}%
    \or        \footnotesize{(cited on page~#2)}%
    \else      \footnotesize{(cited on pages~#2)}%
    \fi}
\DeclareMathAlphabet\mathbfcal{OMS}{cmsy}{b}{n}
\newcommand{\BEAS}{\begin{eqnarray*}}
\newcommand{\EEAS}{\end{eqnarray*}}
\newcommand{\BEA}{\begin{eqnarray}}
\newcommand{\EEA}{\end{eqnarray}}
\newcommand{\BEQ}{\begin{equation}}
\newcommand{\EEQ}{\end{equation}}
\newcommand{\BIT}{\begin{itemize}}
\newcommand{\EIT}{\end{itemize}}
\newcommand{\BNUM}{\begin{enumerate}}
\newcommand{\ENUM}{\end{enumerate}}
\newcommand{\BA}{\begin{array}}
\newcommand{\EA}{\end{array}}
\newcommand{\sign}{\mathop{ \rm sign}}
\newcommand{\idm}{I}
\newcommand{\rb}{\mathbb{{R}}}
\newcommand{\ds}{\displaystyle }
\newcommand{\BlackBox}{\rule{1.5ex}{1.5ex}}  
\newcommand{\bs}{\overline{c} }
\newcommand{\ws}{\underline{c} }
\newenvironment{proof}{\par\noindent{\bf Proof\ }}{\hfill\BlackBox\\[2mm]}
\newtheorem{proposition}{Proposition}
\newcommand{\mysec}[1]{Section~\ref{sec:#1}}
\newcommand{\eq}[1]{Eq.~(\ref{eq:#1})}
\newcommand{\myfig}[1]{Figure~\ref{fig:#1}}
 \def \ds { \displaystyle}
\def \S{  { \mathbb{S}} }
\def \E{{\mathbb E}}
\def \B{{\mathbb B}}
\title{On the relationship between multivariate splines \\
and infinitely-wide neural networks}
\author{Francis Bach\\
Inria,  Ecole Normale Sup\'erieure \\
PSL Research University \\
\url{francis.bach@inria.fr}}
\date{\today}
\begin{document}
\maketitle

\begin{abstract}
 We consider multivariate splines and show that they have a random feature expansion as infinitely wide neural networks with one-hidden layer and a homogeneous activation function which is the power of the rectified linear unit. We show that the associated function space is a Sobolev space on a Euclidean ball, with an explicit bound on the norms of derivatives.
This link provides a new random feature expansion for multivariate splines that allow efficient algorithms. This random feature expansion is numerically better behaved than usual random Fourier features, both in theory and practice. In particular, in dimension one, we compare the associated leverage scores to compare the two random expansions and show a better scaling for the neural network expansion.
\end{abstract}

\section{Introduction}

Multivariate non-parametric regression can be approached from a variety of methods: decision trees, local averaging methods such as Nadaraya-Watson estimation or $k$-nearest-neighbor regression, neural networks, and methods based on positive definite kernels such as smoothing splines, kriging, and kernel ridge regression (see, e.g.,~\cite{hastie2009elements,wasserman2006all,gyorfi2002distribution}).

In this paper, we build on the following known relationship between kernel-based methods and infinitely-wide neural networks~\cite{neal1995bayesian,rahimi2008random}. We consider an activation function $\sigma: \rb \to \rb$, and a one-hidden-layer neural network model on $\rb^d$ of the form 
$$
f(x) = \sum_{j=1}^m \eta_j \sigma(w_j^\top x +b_j),
$$
where $\eta_j \in \rb$, $(w_j,b_j) \in \rb^{d+1}$, for $j=1,\dots,m$.
When an $\ell_2$-regularization is added to the objective function which is used to fit the model,   this is equivalent to using a kernel-based method with positive-definite kernel 
\BEQ
\label{eq:eqk}
\hat{k}(x,y) = \frac{1}{m} \sum_{j=1}^m \sigma(w_j^\top x +b_j) \sigma(w_j^\top y +b_j).
\EEQ
See, e.g.,~\cite{scholkopf2002learning} for an introduction to kernel methods.
If the $m$ input weights $(w_j,b_j) \in \rb^{d+1}$ are sampled independently and identically distributed, when $m$ tends to infinity, by the law of large numbers, $\hat{k}(x,y)$ tends to the equivalent kernel 
\BEQ
\label{eq:k}
k(x,y) = \E_{(w,b)} \big[  \sigma(w^\top x +b) \sigma(w^\top y +b) \big] .
\EEQ
This equivalence between infinitely wide neural networks and kernel methods has already been used in several ways:
\BIT
\item Given a known kernel $k$ which can be expressed as an expectation as in \eq{k}, we can use the approximate kernel $\hat{k}$ as in \eq{eqk}, and its explicit random features to derive efficient algorithms~\cite{rahimi2008random,rudi2017generalization}: with $n$ observations, we can circumvent the computation of the $n\times n$ kernel matrix by computing the $m$-dimensional feature vector for each of the $n$ observations, which is advantageous when $m < n$.
\item Given a known neural network architecture, they allow the study of the regularization properties of using over-parameterized models, that is, with the number of hidden neurons going to infinity~\cite{le2007continuous}.
\EIT

In this paper, we make the following contributions, that contribute to the two ways mentioned above of relating kernels and neural networks:
\BIT
\item We consider multivariate splines~\cite{duchon1977splines,buhmann2003radial}, with kernels proportional to $\| x - y\|_2^{2\alpha+1}$, for $\alpha \in \mathbb{N}$ (where $\| \cdot\|_2$ denotes the standard Euclidean norm), and show that they have a random feature expansion as infinitely wide neural networks with one-hidden layer and a homogeneous activation function which is the $\alpha$-th power of the ``rectified linear unit''~\cite{nair2010rectified}. This extends the earlier work of~\cite{le2007continuous}, which proved this link for $\alpha=0$ (step activation function).
\item We show that the associated function space is a Sobolev space with order $s = \frac{d+1}{2} + \alpha$, with an explicit dependence between the norms.
\item This link provides a new random feature expansion for multivariate splines that allow efficient algorithms. This random feature expansion numerically behaves better than usual random Fourier features, both in theory and practice. In particular, in dimension one, we compare the associated leverage scores~\cite{bach2017equivalence} to compare the two random expansions. This also provides a more efficient alternative to random Fourier feature expansion for all Mat\'ern kernels~\cite[page 84]{williams2006gaussian}.
\EIT

\section{From one-hidden layer neural networks to positive-definite kernels}
We consider the Euclidean ball $\B^d(R)$ of  center $0$ and radius $R$ in  $\rb^d$, for $R>0$, and consider the estimation of real-valued functions on $\B^d(R)$.

For $\alpha \in \mathbb{N}$, we consider activation functions $\sigma$ of the form $\sigma(u) = (u_+)^\alpha
= \max\{u,0\}^\alpha$, that is, $\sigma(u) = 0$ for $u \leqslant 0$, and $\sigma(u) = u^\alpha$ for $u>0$, with the usual convention that $u^0 = 1$ if $u>0$, with a particular focus on $\alpha \in \{0,1\}$. For $\alpha=0$, we recover the step-function $\sigma(u)=1_{u>0}$, and for $\alpha=1$ the rectified linear unit $\sigma(u) = u_+$.

We consider randomly distributed weights $(w,b) \in \rb^{d+1}$, and the positive definite kernel
$$
k(x,y) = \E_{(w,b)} \big[ \sigma(w^\top x + b) \sigma(w^\top y + b)\big].
$$

Since we use homogeneous activation functions, we can normalize weights $(w,b)$ so that they have compact supports.
Several normalizations and distributions can be used to obtain closed-form formulas.

\paragraph{Full spherical symmetry in $\rb^{d+1}$.}
The first normalization is to write 
$$w^\top x + b = { w \choose b/R} ^\top { x \choose R},$$
and let ${ w \choose b/R}$ be rotationally invariant, for example, be uniformly distributed on the unit $\ell_2$-sphere in dimension $\rb^{d+1}$. This leads to closed-form formulas~\cite{cho2009kernel,bach2017breaking} for the corresponding kernel $\tilde{k}^{(\alpha)}_d$, with $\ds\cos \varphi = \frac{ x^\top y + R^2}{  ( \|x\|_2^2 + R^2)^{1/2}  ( \|y\|_2^2 + R^2)^{1/2}} \geqslant 0$ (so that $\varphi \in [0,\pi/2]$), leading to for  small values of $\alpha$:
\BEAS
\tilde{k}^{(0)}_d(x,y) & = &  \frac{1}{2\pi} ( \pi - \varphi )
\\
\tilde{k}^{(1)}_d(x,y) & = &  \frac{ 1 }{2\pi(d+1)}  ( \|x\|_2^2 + R^2)^{1/2}  ( \|y\|_2^2 + R^2)^{1/2} \times \big[
\sin \varphi + ( \pi - \varphi)  \cos \varphi
\big]
\\
\tilde{k}^{(2)}_d(x,y) & = &   \frac{1 }{2\pi (d+1)(d+3)}   ( \|x\|_2^2 + R^2)  ( \|y\|_2^2 + R^2) 
 \times \big[ 3 \sin \varphi \cos \varphi + (\pi - \varphi)  ( 1 + 2 \cos^2\varphi) \big].
\EEAS
More generally (see~\cite{cho2009kernel}), we have:
$$
\tilde{k}^{(\alpha)}_d(x,y) = \frac{1 }{2\pi (d+1)(d+3) \cdots (d+2\alpha-1) }   ( \|x\|_2^2 + R^2)^{\alpha/2}  ( \|y\|_2^2 + R^2) ^{\alpha/2} 
 \times J_\alpha( \varphi),
$$
with $J_\alpha(\varphi) = (-1)^{\alpha} (\sin \varphi)^{2\alpha+1} \Big( \frac{1}{\sin \varphi} \frac{d}{d\varphi} \Big)^\alpha \Big( \frac{\pi - \varphi}{\sin \varphi} \Big)$, which is of the form $P_\alpha(\cos \varphi, \sin \varphi) + Q_\alpha(\cos \varphi, \sin \varphi) ( \pi - \varphi)$ for $P_\alpha$ and $Q_\alpha$ polynomials of degree less than $\alpha$.
 
\paragraph{Regularization properties.}
The associated space of functions can be described through spherical harmonics in ambient dimension $d+1$, as, e.g.,~described in  \cite{pmlr-v130-scetbon21b,bach2017breaking}, through the use of the Laplacian on the hyper-sphere. This requires, however, strong knowledge of spherical harmonics and is not easy to relate to classical notions of derivatives in $\rb^d$. Note that Hermite polynomials can be used as well~\cite{daniely2016toward}. Overall we obtain the Sobolev space with degree $s = \frac{d+1}{2}+   \alpha$ over $\B^d(R)$, but with a non-explicit expression in terms of derivatives.

In this paper, we consider another normalization with easier interpretations and links with existing kernels from the statistical literature. This is done using only a spherical symmetry on $w \in \rb^d$.

\paragraph{Partial spherical symmetry (in $\rb^{d})$.}
We can instead choose to have ${ w \choose b/R}$ uniformly distributed on the product $\mathbb{S}^{d-1} \times [-1,1]$ where $\mathbb{S}^{d-1} \subset \rb^d$ is the unit $\ell_2$-sphere, following~\cite{le2007continuous} that introduced this normalization for $\alpha=0$.  This corresponds to $w$ uniform on the sphere $\mathbb{S}^{d-1}$ and $b$ uniform on $[-R,R]$. 

The main goal of this paper is to provide closed-form formulas for the kernel as well as to study the regularization properties. We will start in dimension one ($d=1$) and extend to all dimensions in later sections.

We thus define the positive-definite kernel:
$$
k_d^{(\alpha)}(x,y) = \E_{(w,b)}  \big[ (w^\top x + b)_+^\alpha  (w^\top y + b)_+^\alpha\big],
$$
where $(w,b)$ is uniform on $\S^{d-1} \times [-R,R]$.

\section{Kernels on the interval $[-R,R]$ ($d=1$)}

\label{sec:kernelinterval}

We first consider the kernel for $d=1$, where $w \in \{-1,1\}$, for which we have, by a change of variable $b \to -b$:
\BEAS
 k_1^{(\alpha)}(x,y) & = &  \frac{1}{4R} \int_{-R}^R ( x+b)_+^\alpha( y+b)^\alpha_+ db
+
\frac{1}{4R} \int_{-R}^R ( -x+b)_+^\alpha( -y+b)^\alpha_+ db \\
 & = &  \frac{1}{4R} \int_{-R}^R ( x-b)_+^\alpha( y-b)^\alpha_+ db
+
\frac{1}{4R} \int_{-R}^R ( b-x)_+^\alpha(b-y)^\alpha_+ db.
\EEAS

\subsection{Closed-form formulas}
We first consider the case $\alpha=0$ and then generalize from it. We have for $\alpha=0$, by direct integration:
\BEAS
 k_1^{(0)}(x,y) & = &  \frac{1}{4R} \int_{-R}^ { \min\{x,y\}}    db
+
\frac{1}{4R} \int_{\max\{x,y\}} ^R db  = \frac{1}{2} - \frac{1}{4R} \big[ \max\{x,y\} - \min\{x,y\} \big]\\
& = & 
 \frac{1}{2}- \frac{1}{4R}|x-y| .
 \EEAS

A more tedious direct computation gives the expression for other small values of $\alpha$, as:
\BEAS
k_1^{(1)}(x,y) & = &    \frac{R^2}{6} + \frac{1}{2} x y +  \frac{1}{24 R}|x-y|^3\\
k_1^{(2)}(x,y)   & = & \frac{R^4}{10} + \frac{2 R^2 xy}{3} + \frac{R^2}{6} ( x^2 + y^2)  + \frac{1}{2} x^2 y^2 - \frac{1}{120 R} | x - y|^5 .
\EEAS
This can be extended to all values in $\alpha$ in the following proposition, shown in Appendix~\ref{app:proofd1}.
Note that in one dimension, \cite{kristiadi2021infinite} already made the connection between cubic splines and infinitely-wide neural networks.

\begin{proposition}[Closed-form formula for $d=1$] 
\label{prop:d1}
Let $\alpha \in \mathbb{N}$, we have, for  $(w,b)$ uniformly distributed on the product $\{-1,1\} \times [-R,R]$, 
$$
k_1^{(\alpha)}(x,y) = \E_{(w,b)}  \big[ (w^\top x + b)_+^\alpha  (w^\top y + b)_+^\alpha\big]
= P^{(\alpha)}_1(x^2, y^2, x y) + \frac{1}{R} c^{(\alpha)}_1|x-y|^{2\alpha+1},
$$
where $P^{(\alpha)}_1$ is a polynomial of degree $\alpha$, such that  $k_1^{(\alpha),({\rm pol})}(x,y) =  P^{(\alpha)}_1(x^2, y^2, x y) $ is a positive-definite kernel, and
$\ds c^{(\alpha)}_1 = \frac{(-1)^{\alpha+1}}{4} \frac{ ( \alpha! )^2}{(2\alpha+1)! }$.
\end{proposition}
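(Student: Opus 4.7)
The plan is to collapse the two integrals defining $k_1^{(\alpha)}$ into one polynomial integral and one Beta-function integral by means of the pointwise identity
$$(x-b)_+^\alpha (y-b)_+^\alpha \;+\; (b-x)_+^\alpha (b-y)_+^\alpha \;=\; \bigl((x-b)(y-b)\bigr)^\alpha \cdot \mathbf{1}\bigl\{b \notin [\min(x,y), \max(x,y)]\bigr\}.$$
This is checked case-by-case: when $b$ lies outside $[\min(x,y),\max(x,y)]$ the two linear factors $x-b$ and $y-b$ have the same sign, so exactly one of the two positive-part products is nonzero and equals $((x-b)(y-b))^\alpha$; when $b$ lies strictly between $x$ and $y$ the factors have opposite signs and both positive-part products vanish. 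Setting $P(b) = ((x-b)(y-b))^\alpha$, the definition of the kernel then rewrites as
$$4R\,k_1^{(\alpha)}(x,y) \;=\; \int_{-R}^{R} P(b)\,db \;-\; \int_{\min(x,y)}^{\max(x,y)} P(b)\,db.$$

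For the first, ``polynomial'', piece, I would expand $P(b) = (xy - (x+y)b + b^2)^\alpha$ via the multinomial theorem; odd powers of $b$ integrate to zero over the symmetric interval $[-R,R]$, so only monomials of the form $(xy)^i(x+y)^j$ with $j$ even and $i+j \leqslant \alpha$ survive. Rewriting $(x+y)^j$ as $((x+y)^2)^{j/2} = (x^2+2xy+y^2)^{j/2}$ shows that $\frac{1}{4R}\int_{-R}^R P(b)\,db$ is a polynomial in $(x^2, y^2, xy)$ of total degree at most $\alpha$; this defines $P_1^{(\alpha)}$. Positive-definiteness of this polynomial kernel is immediate from the factorisation $P(b) = \phi(x,b)\phi(y,b)$ with $\phi(x,b) = (x-b)^\alpha$: it is the standard $L^2([-R,R])$ inner-product kernel associated with the feature map $x \mapsto (x-b)^\alpha$.

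For the second, ``singular'', piece, I would substitute $b = \min(x,y) + s\,|x-y|$ with $s \in [0,1]$, so that $(x-b)(y-b) = -s(1-s)\,|x-y|^2$. Taking $\alpha$-th powers produces a sign $(-1)^\alpha$, and the Beta-function identity $\int_0^1 s^\alpha(1-s)^\alpha\,ds = (\alpha!)^2/(2\alpha+1)!$ gives
$$\int_{\min(x,y)}^{\max(x,y)} P(b)\,db \;=\; (-1)^\alpha\,\frac{(\alpha!)^2}{(2\alpha+1)!}\,|x-y|^{2\alpha+1}.$$
Dividing by $4R$ and combining with the minus sign from the master identity yields exactly $c_1^{(\alpha)} = (-1)^{\alpha+1}(\alpha!)^2 / \bigl(4(2\alpha+1)!\bigr)$, and a quick check against the explicit $\alpha \in \{0,1,2\}$ formulas in the text confirms the constant.

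The main obstacle is spotting the pointwise identity at the start; once it is in place, the rest is a short multinomial expansion plus one Beta integral. A small care is needed in the sign bookkeeping, since $(x-b)(y-b)$ is negative on the interior $(\min(x,y), \max(x,y))$, and this is precisely what generates the $(-1)^\alpha$ that combines with the overall minus sign to produce the alternating $(-1)^{\alpha+1}$ in $c_1^{(\alpha)}$.
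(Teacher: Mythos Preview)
Your proof is correct and reaches the same decomposition as the paper --- the polynomial kernel $\frac{1}{4R}\int_{-R}^R(x-b)^\alpha(y-b)^\alpha\,db$ plus the $|x-y|^{2\alpha+1}$ term with a Beta-function constant --- but by a more direct route. The paper handles the two half-integrals $\int_{-R}^{\min\{x,y\}}$ and $\int_{\max\{x,y\}}^{R}$ separately: it substitutes $s=(x+y)/2$, $\delta=(x-y)/2$, expands $((s-b)^2-\delta^2)^\alpha$ by the binomial theorem, integrates term by term, and only afterwards recombines the two halves by symmetry into the single polynomial integral. Your pointwise identity collapses this into one step: the sum of the two integrands is $P(b)\,\mathbf{1}_{b\notin[\min,\max]}$, so the split $\int_{-R}^R P - \int_{\min}^{\max} P$ falls out immediately. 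Both arguments then compute the same Beta integral (the paper via $\int_0^1(1-u^2)^\alpha\,du$ and a further substitution, you directly via $\int_0^1 s^\alpha(1-s)^\alpha\,ds$). Your version is cleaner; the paper's binomial expansion has the side benefit of producing the explicit coefficient formula for $P_1^{(\alpha)}$ that appears just after the proposition, which your multinomial sketch would also yield with a little more bookkeeping.
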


Moreover, as shown in Appendix~\ref{app:proofd1}, we have a special form of polynomial kernel
$k_1^{(\alpha),({\rm pol})}(x,y) =  P^{(\alpha)}_1(x^2, y^2, x y) $, as:
\BEA
\notag k_1^{(\alpha),({\rm pol})}(x,y) & = &   \frac{1}{4R}\int_{-R}^R    ( x-b)^\alpha(y-b)^\alpha db , \ \mbox{ which can be expressed as} \\
\notag k_1^{(\alpha),({\rm pol})}(x,y)  & = & \frac{1}{2R} \sum_{i,j=0}^{\alpha} 1_{i+j\  { \rm even }} \cdot { \alpha \choose i} {\alpha \choose j}  {x^i y^j}
\frac{R^{2\alpha +1 - i -j }}{2\alpha +1 - i -j}
\\
\label{eq:expl} & = & \frac{1}{2 } \sum_{s=0}^{\alpha} \frac{R^{2\alpha - 2s }}{2\alpha +1 - 2s }
\sum_{i,j=0}^{\alpha} 1_{i+j = 2s} \cdot { \alpha \choose i} {\alpha \choose j}  {x^i y^j}.
\EEA
Note that the term for $s=0$, is $\frac{R^{2\alpha}}{2(2\alpha+1)}$, while for $\alpha>0$, the term corresponding to $s=1$ is equal to $\frac{\alpha^2 R^{2(\alpha-1)}}{2(2\alpha-1)} xy$. 
In all cases, it can be computed in time at most $O(\alpha^2)$. Moreover, the corresponding feature space leads to all polynomials of degree less than $\alpha$ (see proof in Appendix~\ref{app:pol}).
This result will be directly extended to dimensions $d$ greater than one in Prop.~\ref{prop:closedformd}.

\subsection{Corresponding norm}
All positive-definite kernels define a Hilbert space of real-valued functions on $\B^d(R)$ with a particular norm. For kernels that can be expressed as expectations, this norm $\Omega_1^{(\alpha)}$ is equal to~\cite{berlinet2011reproducing,bach2017breaking}:
\BEAS
\Omega_1^{(\alpha)}(f)^2 & = & \inf_{ \eta_{\pm}: [-R,R] \to \rb } \frac{1}{4R} \int_{-R}^R \big[ \eta_+(b)^2
+ \eta_-(b)^2\big] db \\
& &  \mbox{ such that } \forall x \in [-R,R], f(x) = \frac{1}{4R} \int_{-R}^R \big[ \eta_+(b) (x-b)_+^\alpha 
+\eta_-(b) (b-x)_+^\alpha  \big]   db,
\EEAS
where the infimum is taken over square-integrable functions $\eta_+$ and $\eta_-$.

\paragraph{Special case $\alpha=0$.} For $f$ continuously differentiable, we can  use and average  the  two simple representations:
$$
f(x)  = f(-R) + \int_{-R}^x f'(b) (x-b)_+^0 db = f(R) - \int_{-R}^R f'(b) (b-x)_+^0 db,
$$
to get $ \ds f(x) = \frac{1}{2}[ f(R)+f(-R) ] + 2R\int_{-R}^R f'(b) (x-b)_+^0 \frac{db}{4R}
- 2R\int_{-R}^R f'(b) (b-x)_+^0 \frac{db}{4R}.$
The constant function equal to $1/2$ on $[-R,R]$ can be represented as:
$$
\int_{-R}^R (x-b)_+^0 \frac{db}{4R} + \int_{-R}^R (b-x)_+^0 \frac{db}{4R}.
$$
We can thus take: $\eta_+(b) = 2Rf'(b) +  [ f(R)+f(-R) ] $
and $\eta_+(b) = -2Rf'(b) +  [ f(R)+f(-R) ] $.

This leads to the squared norm $\Omega_1^{(0)}(f)^2$ less than (since the cross-terms cancel):
$$
2R  \int_{-R}^R f'(x)^2 dx+  \big[ f(-R)+f(R)\big]^2.
$$
In particular, the norm is finite as soon as the quantity above is well-defined, that is, $f'$ square integrable. To show that this is indeed the correct norm, we simply need to check that our representation is optimal, which is shown below for all $\alpha$'s (see Prop.~\ref{prop:d1norm}). Thus 
$$
\Omega_1^{(0)}(f)^2 = 2R  \int_{-R}^R f'(x)^2 dx+  \big[ f(-R)+f(R)\big]^2.
$$

 \paragraph{General case $\alpha \geqslant 0$.}

To obtain the norm, we can notice that continuous expansions with functions $(x-b)_+^\alpha$ are exactly obtained from Taylor expansions with integral remainders, which apply to functions defined on $[-R,R]$ with $\alpha+1$ continuous derivatives:
$$
f(x) = \sum_{i=0}^\alpha \frac{ f^{(i)}(-R)}{i!} (x+R)^i + \int_{-R}^R \frac{ f^{(\alpha+1)}(b)}{\alpha!} (x-b)_+^\alpha db.
$$
Ignoring the boundary conditions, we see that $\eta_+(b)$ should be related to $\frac{1}{\alpha!} f^{(\alpha+1)}(b) $, and that the RKHS norm should include the integral $\ds \int_{R}^R  f^{(\alpha+1)}(x)^2 dx$. The following proposition makes this explicit (see proof in Appendix~\ref{app:proofd1norm}).

\begin{proposition}[RKHS norm for $d=1$] 
\label{prop:d1norm}
The RKHS norm on functions on $[-R,R]$ associated to the kernel~$k_1^{(\alpha)}$ is equal to:
$$\Omega_1^{(\alpha)}(f)^2 =  \frac{2R}{\alpha!^2} \int_{-R}^R f^{(\alpha+1)}(x)^2 dx + \Theta^{(\alpha)}\big[ f^{(0)}(-R),f^{(0)}(R),\dots,
f^{(\alpha)}(-R),f^{(\alpha)}(R)\big],
$$
where $ \Theta^{(\alpha)}$ is non-negative quadratic form.
\end{proposition}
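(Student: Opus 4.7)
The plan is to exhibit an explicit representation $(\eta_+^*,\eta_-^*)$ of $f$, show it attains the infimum defining $\Omega_1^{(\alpha)}(f)^2$, and evaluate its squared norm directly. By density of $C^{\alpha+1}$ functions in the RKHS, it suffices to treat $f\in C^{\alpha+1}([-R,R])$.

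The starting point is Taylor's formula with integral remainder at both endpoints, which gives the two representations
\begin{eqnarray*}
f(x) &=& \sum_{i=0}^{\alpha}\frac{f^{(i)}(-R)}{i!}(x+R)^i+\frac{1}{\alpha!}\int_{-R}^R f^{(\alpha+1)}(b)(x-b)_+^\alpha db,\\
f(x) &=& \sum_{i=0}^{\alpha}\frac{f^{(i)}(R)}{i!}(x-R)^i+\frac{(-1)^{\alpha+1}}{\alpha!}\int_{-R}^R f^{(\alpha+1)}(b)(b-x)_+^\alpha db.
\end{eqnarray*}
Averaging these yields $f(x)=P(x)+\frac{1}{2\alpha!}\int_{-R}^R f^{(\alpha+1)}(b)\bigl[(x-b)_+^\alpha+(-1)^{\alpha+1}(b-x)_+^\alpha\bigr]db$, where $P$ is a polynomial in $x$ of degree $\leqslant\alpha$ whose coefficients depend linearly on the boundary jet $\{f^{(i)}(\pm R)\}_{i=0,\dots,\alpha}$. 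Using the result of Appendix~\ref{app:pol} that $k_1^{(\alpha),({\rm pol})}$ has RKHS equal to polynomials of degree $\leqslant\alpha$, I write $P(x)=\frac{1}{4R}\int_{-R}^R \phi(b)(x-b)^\alpha db$ for some polynomial $\phi$ of degree $\leqslant\alpha$ whose coefficients are again linear in the boundary jet. Expanding $(x-b)^\alpha=(x-b)_+^\alpha+(-1)^\alpha(b-x)_+^\alpha$, I then define the candidate
$$\eta_+^*(b)=\frac{2R}{\alpha!}f^{(\alpha+1)}(b)+\phi(b),\qquad \eta_-^*(b)=\frac{(-1)^{\alpha+1}2R}{\alpha!}f^{(\alpha+1)}(b)+(-1)^\alpha\phi(b),$$
which by construction represents $f$ in the sense of the definition of $\Omega_1^{(\alpha)}(f)^2$.

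To show optimality, I compute the null space of the linear map $(\eta_+,\eta_-)\mapsto f$. Differentiating the defining integral $\alpha+1$ times in $x$ and using $\frac{d^{\alpha+1}}{dx^{\alpha+1}}(x-b)_+^\alpha=\alpha!\,\delta(x-b)$ together with $\frac{d^{\alpha+1}}{dx^{\alpha+1}}(b-x)_+^\alpha=(-1)^{\alpha+1}\alpha!\,\delta(x-b)$, I get $f^{(\alpha+1)}(x)=\frac{\alpha!}{4R}\bigl[\eta_+(x)+(-1)^{\alpha+1}\eta_-(x)\bigr]$ on $(-R,R)$. Thus $f\equiv 0$ forces $\eta_-=(-1)^\alpha\eta_+$, and substituting back yields $\int_{-R}^R\eta_+(b)(x-b)^\alpha db=0$ for all $x$, equivalently $\eta_+\perp$ polynomials of degree $\leqslant\alpha$ in $L^2$. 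By standard Hilbert-space projection, the minimum-norm representation in $L^2\times L^2$ is characterized by orthogonality to this null space, which is equivalent to $\eta_+^*+(-1)^\alpha\eta_-^*$ being itself a polynomial of degree $\leqslant\alpha$. My candidate satisfies $\eta_+^*+(-1)^\alpha\eta_-^*=2\phi$, so it is optimal.

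A direct expansion of the squared norm then finishes the proof. The key cancellation is that $(-1)^{\alpha+1}(-1)^\alpha=-1$, so the cross terms between $f^{(\alpha+1)}$ and $\phi$ appear with opposite signs in $\eta_+^{*2}$ and $\eta_-^{*2}$ and cancel exactly, leaving
$$\Omega_1^{(\alpha)}(f)^2=\frac{1}{4R}\int_{-R}^R\bigl[\eta_+^{*2}+\eta_-^{*2}\bigr]db=\frac{2R}{\alpha!^2}\int_{-R}^R f^{(\alpha+1)}(x)^2 dx+\frac{1}{2R}\int_{-R}^R \phi(b)^2 db.$$
Since $\phi$ depends linearly on the boundary jet of $f$, the second term is a non-negative quadratic form $\Theta^{(\alpha)}$ in that jet, as claimed. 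The main obstacle is the null-space computation, which needs a careful distributional argument to justify differentiating the integrand $\alpha+1$ times and to check that no boundary $\delta$-contributions arise at $\pm R$; the rest is essentially bookkeeping with Taylor expansions and the polynomial-RKHS result of Appendix~\ref{app:pol}.
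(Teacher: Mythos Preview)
Your proof is correct and follows essentially the same route as the paper's: both average the Taylor expansions at $\pm R$ to isolate a polynomial boundary part, differentiate the representation $\alpha+1$ times to pin down $\eta_\pm$ up to a free function (your $\phi$, the paper's $c$), exploit the sign cancellation that decouples the $f^{(\alpha+1)}$ term from the polynomial term in the squared norm, and conclude that the optimal free function is a polynomial of degree $\leqslant\alpha$ determined linearly by the boundary jet. The only cosmetic difference is that you phrase optimality via orthogonality to the null space of the representation map, whereas the paper directly minimizes $\int c^2$ subject to the moment constraints; these are the same argument.
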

For example, for  $\alpha=1$, we get:
\BEAS
 \Omega_1^{(1)}(f)^2 & = & 2R \int_{-R}^R f''(x)^2 dx + \big[ f'(R)+f'(-R) \big]^2
+ \frac{3}{2R^2} \big[ f(-R) + f(R)  - R   f'(R)  +R f'(-R) \big]^2 .\EEAS

\paragraph{Equivalence to classical Sobolev norms.} Using classical results on Sobolev spaces~\cite{adams2003sobolev}, the norm in Proposition~\ref{prop:d1norm} can be shown to be equivalent to the classical squared Sobolev norm $ \ds R \int_{-R}^R f^{(\alpha+1)}(x)^2 dx + 
 \frac{1}{R^{2\alpha+1} } \int_{-R}^R f(x)^2 dx$.
We will generalize this to all dimensions and provide an explicit equivalence in the following sections.

\section{Kernels on the ball $\B^d(R)$ ($d\geqslant 1$)}

We now extend results from \mysec{kernelinterval} to all dimensions $d\geqslant 1$. We will get explicit closed-form formulas but with a slightly less explicit formulation for the RKHS norm.

 \subsection{Closed-form formulas}
	We start with the closed-form formula that directly extends Prop.~\ref{prop:d1}.
\begin{proposition}[Closed-form formula for $d\geqslant 1$] 
\label{prop:closedformd}
 Let $\alpha \in \mathbb{N}$, we have, for  ${ w \choose b/R}$ uniformly distributed on the product $\mathbb{S}^{d-1} \times [-1,1]$, 
$$
k_d^{(\alpha)}(x,y) = \E_{(w,b)}  \big[ (w^\top x + b)_+^\alpha  (w^\top y + b)_+^\alpha\big]
= P_d^{(\alpha)}(\| x\|_2^2, \|y\|_2^2, x^\top y) + \frac{1}{R} c_d^{(\alpha)}
\|x-y\|_2^{2\alpha+1},
$$
where $P_d^{(\alpha)}$  is a polynomial of degree $\alpha$, such that 
$ k_d^{(\alpha),({\rm pol})}(x,y) =  P_d^{(\alpha)}(\| x\|_2^2, \|y\|_2^2, x^\top y) $ is a positive-definite kernel, and
$\ds c^{(\alpha)}_d =       
\frac{(-1)^{\alpha+1}}{4 \sqrt{\pi}} \frac{ \alpha!^3 \Gamma(\frac{d}{2})}{ (2\alpha + 1)! \,   \Gamma(\frac{d}{2}+ \frac{1}{2} + \alpha)}$.
\end{proposition}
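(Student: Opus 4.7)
The plan is to reduce the $d$-dimensional computation to the $d=1$ case of Proposition~\ref{prop:d1} by conditioning on the direction $w \in \mathbb{S}^{d-1}$. Given $w$, the inner expectation over $b$ is exactly the half of the 1D kernel integrand, namely $\frac{1}{2R}\int_{-R}^R (w^\top x + b)_+^\alpha (w^\top y + b)_+^\alpha\, db$. This by itself is not the full 1D kernel (which had two terms), but the key observation is that $w$ and $-w$ have the same distribution, so one can symmetrize: averaging $I(u,v) := \frac{1}{2R}\int_{-R}^R (u+b)_+^\alpha(v+b)_+^\alpha\, db$ with $I(-u,-v)$ (via the change of variable $b \to -b$ in one of them) produces exactly $k_1^{(\alpha)}(u,v)$ as written in \mysec{kernelinterval}. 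Hence
$$
k_d^{(\alpha)}(x,y) = \E_{w}\bigl[k_1^{(\alpha)}(w^\top x,\, w^\top y)\bigr].
$$

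Next, I would plug in the closed form from Proposition~\ref{prop:d1} and treat the two pieces separately. For the polynomial part, substituting $u = w^\top x$, $v = w^\top y$ into $P_1^{(\alpha)}(u^2,v^2,uv)$ (see \eq{expl}) yields a sum of monomials of the form $(w^\top x)^i (w^\top y)^j$ with $i+j$ even. Averaging over $w$ uniform on $\mathbb{S}^{d-1}$, rotation invariance guarantees that each such expectation is a polynomial in the rotation invariants $\|x\|_2^2$, $\|y\|_2^2$, $x^\top y$ of degree $(i+j)/2$, which gives the polynomial $P_d^{(\alpha)}(\|x\|_2^2, \|y\|_2^2, x^\top y)$ of total degree $\alpha$. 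Positive-definiteness of this polynomial part follows because it is obtained from the positive-definite kernel $k_1^{(\alpha),(\mathrm{pol})}$ by the general principle that $(x,y) \mapsto \E_w[ k(w^\top x, w^\top y)]$ is positive-definite whenever $k$ is.

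For the non-polynomial part, we need to evaluate
$$
\E_w \bigl| w^\top(x-y)\bigr|^{2\alpha+1} = \|x-y\|_2^{2\alpha+1}\, \E_w |w_1|^{2\alpha+1},
$$
using rotational invariance to reduce to $\E_w|w_1|^{2\alpha+1}$. This is a classical Beta-type integral on the sphere; I would compute it by the standard identity
$$
\E_w|w_1|^m = \frac{\Gamma(d/2)\,\Gamma((m+1)/2)}{\sqrt{\pi}\,\Gamma((d+m)/2)},
$$
which for $m = 2\alpha+1$ gives $\frac{\Gamma(d/2)\,\alpha!}{\sqrt{\pi}\,\Gamma(d/2 + \alpha + 1/2)}$. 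Multiplying by $c_1^{(\alpha)} = \frac{(-1)^{\alpha+1}}{4}\frac{(\alpha!)^2}{(2\alpha+1)!}$ produces exactly the stated constant $c_d^{(\alpha)}$.

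The only mildly delicate step is the symmetrization argument in the reduction to 1D — one must make sure that the two halves of $k_1^{(\alpha)}$ really do appear with equal weight after integrating over $w$, which follows because the pushforward of uniform measure on $\mathbb{S}^{d-1}$ under $w \mapsto -w$ is itself. All remaining work is either direct application of Proposition~\ref{prop:d1} or the standard spherical moment identity, so I do not expect a serious obstacle; the ``hard'' content is already packaged in the 1D case.
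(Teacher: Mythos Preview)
Your proposal is correct and follows essentially the same route as the paper: reduce to the identity $k_d^{(\alpha)}(x,y)=\E_w[k_1^{(\alpha)}(w^\top x,w^\top y)]$, then invoke Proposition~\ref{prop:d1}, the spherical moment $\E_w|w^\top z|^{2\alpha+1}=\|z\|_2^{2\alpha+1}\Gamma(1+\alpha)\Gamma(d/2)/[\Gamma(1/2)\Gamma(d/2+1/2+\alpha)]$ for the non-polynomial part, and rotation invariance of $\E_w[(w^\top x)^i(w^\top y)^j]$ (with $i+j$ even) for the polynomial part. Your explicit justification of the $w\mapsto -w$ symmetrization and of positive-definiteness of the polynomial piece are details the paper leaves implicit, but the argument is otherwise the same.
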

\begin{proof}
We have $\ds k_d^{(\alpha)}(x,y) = \E_{w}  \big[   k_1^{(\alpha)}(w^\top x,w^\top y)  \big]$, and we
 simply use, for $w$ uniform on the sphere: $\ds \E [ |w^\top z |^{2\alpha+1} ]
=  \| z \|_2^{2\alpha+1}  \frac{ \Gamma(1+\alpha) \Gamma(\frac{d}{2})}{\Gamma(\frac{1}{2}) \Gamma(\frac{d}{2}+ \frac{1}{2} + \alpha)}$ (see Appendix~\ref{app:formulas}), which leads to the expression for $c_d^{(\alpha)}$. To treat the polynomial kernel part, we use \eq{expl}, and the fact that for $w$ uniform, and $i+j$ even,
$\E \big[ (w^\top x)^i (w^\top y)^j \big]$ is a polynomial of degree less than $i+j$ in $x^\top x$, $y^\top y$ and $y^\top x$.
\end{proof}
Like for $d=1$, we have an integral representation for the kernel 
$P_d^{(\alpha)}(\| x\|_2^2, \|y\|_2^2, x^\top y) = k_d^{(\alpha),({\rm pol})}(x,y)$, as
$$  k_d^{(\alpha),({\rm pol})}(x,y) =    \frac{1}{4R}\int_{-R}^R  \E_{w} \big[   ( w^\top x+b)^\alpha(w^\top y +b)^\alpha \big] db.$$
Note that we have defined a new positive-definite polynomial kernel, which is an alternative to the standard kernel $(x,y) \mapsto ( R + x^\top y)^{\alpha}$, that can be computed in time $O(d)$ (with a constant that depends on $\alpha$). As shown in Appendix~\ref{app:pold}, the corresponding space spans all polynomials of degree less than $\alpha$ (or equal).

 We have, for $\alpha \in \{0,1,2\}$:
\BEAS
k_d^{(0)}(x,y)
& = &  \frac{1}{2}
- \frac{1}{4R} \frac{ \Gamma(1) \Gamma(\frac{d}{2})}{\Gamma(1/2) \Gamma(\frac{d+1}{2})} \| x - y \|_2\\
k_d^{(1)}(x,y)& = &
\frac{R^2}{6}
+ \frac{1}{2d} x^\top y      + \frac{1}{24R} \frac{ \Gamma(2) \Gamma(\frac{d}{2})}{\Gamma(\frac{1}{2}) \Gamma(\frac{d}{2}+ \frac{3}{2})} \| x - y \|_2^3
\\
k_d^{(2)}(x,y)& = &
\frac{R^4}{10}
+ \frac{1}{3d} x^\top y  + \frac{R^2}{6d} ( \|x\|_2^2 + \|y\|_2^2)
+ \frac{1}{2d(d+2)} ( 2 ( x^\top y)^2 + \|x\|_2^2 \|y\|_2^2 ) \\
& & \hspace*{8.5cm}   - \frac{1}{120R} \frac{ \Gamma(2) \Gamma(\frac{d}{3})}{\Gamma(\frac{1}{2}) \Gamma(\frac{d}{2}+ \frac{5}{2})} \| x - y \|_2^5.
\EEAS

\paragraph{A simple bound.}
We will need to provide a bound on the associated features. We have, for $\|x\|_2 \leqslant R$:
\BEA
\notag k_d^{(\alpha}(x,x) & = & \frac{1}{4R}\int_{-R}^R  \E_{w} \big[   ( w^\top x+b)^2 \big] \alpha  db
\\ 
\label{eq:CC}& \leqslant & \frac{1}{4R}\int_{-R}^R  \E_{w}   ( 2R )^{2 \alpha}  db = \frac{1}{2} (2R)^{2\alpha}.
\EEA

\subsection{Corresponding norms}
 In dimension $d=1$, we could give an explicit formula for the corresponding RKHS norm, which relied on Taylor's formula with integral remainders.   This will be less explicit in higher dimensions, and we will need to use the theory of multivariate splines~\cite{duchon1977splines,buhmann2003radial}.
 
\section{Link with multivariate splines}
In this section, we first review splines and then draw explicit links. For more details on multivariate splines, see~\cite{wendland2004scattered,wahba1990spline,berlinet2011reproducing}.

\subsection{Review of multivariate splines}
\label{sec:splines}
We consider the function:
$$
E_d^{(\alpha)}(z) = c_d^{(\alpha)} \| z\|_2^\alpha,
$$
which has Fourier transform (defined as a distribution, see~\cite{friedlander1998introduction}):
$$
c_d^{(\alpha)}  (-1)^{\alpha+1} {2^{d+1+2\alpha}\pi^{d/2-1}\Gamma(\alpha+3/2) \Gamma(d/2+1/2+\alpha)} 
\frac{1}{\|\omega\|_2^{d+1+2\alpha}} =
b_d^{(\alpha)} \frac{1}{\|\omega\|_2^{d+1+2\alpha}} .
$$

The kernel $E_d^{(\alpha)}  (x-y)$ is known to be ``conditionally positive of order $\alpha$''~\cite{wendland2004scattered,wahba1990spline}, that is for each $x_1,\dots,x_n \in \rb^d$, and $\lambda_1,\dots,\lambda_n$ such that $\sum_{i=1}^n \lambda_i P(x_i) = 0$ for all polynomials $P$ of degree less than $\alpha$, 
$$\sum_{i,j=1}^n \lambda_i \lambda_j E_d^{(\alpha)}  (x_i - x_j) \geqslant 0.$$
We also know that for any function $L: \rb^n \to \rb \cup \{ +\infty\}$, the minimization of 
$$L(f(x_1),\dots,f(x_n)) + \frac{1}{2 b_d^{(\alpha)} } \frac{1}{(2\pi)^d} \int_{\rb^d} | \hat{f}(\omega)|^2 \|\omega\|_2^{d+1+2\alpha} d\omega$$
is attained at $\ds f(x) = P(x) + \sum_{i=1}^n \lambda_i E_d^{(\alpha)}(x-x_i)$, with $P$ and $\lambda$ obtained through the minimization of
$$L(f(x_1),\dots,f(x_n)) + \frac{1}{2} \sum_{i,j=1}^n \lambda_i \lambda_j E_d^{(\alpha)}  (x_i - x_j) $$
with respect to the polynomial $P$ of degree less than $\alpha$,  and $\lambda \in \rb^n$ such that  
 $\sum_{i=1}^n \lambda_i Q(x_i) = 0$ for all polynomials $Q$ of degree less than $\alpha$~\cite{duchon1977splines}. 
 
When $d$ is odd, then we have an explicit representation in terms of partial derivatives:
$$
\frac{1}{(2\pi)^d} \int_{\rb^d} | \hat{f}(\omega)|^2 \|\omega\|_2^{d+1+2\alpha} d\omega
= \int_{\rb^d} \| D^{\frac{d+1}{2}+\alpha}  f (x) \|^2dx,
$$
where $D^{\frac{d+1}{2}+\alpha}  f $ is the tensor of all partial derivatives of order $ \frac{d+1}{2}+\alpha$.

The expansion above can be extended to the representation of functions on $\B^d(R)$ such that the norm above is finite as $f(x) = \ds P(x) + \int_{\B^d(R)} \!\! E_d^{(\alpha)}(x-y) d\lambda (y)$ where $\lambda$ is Radon measure.

\paragraph{Algorithms.} Given a map $\varphi: \rb^d \to \rb^{m}$ that can represent all polynomials of degree less than $\alpha$, that is, with $m = { d + \alpha \choose \alpha}$, then we can write the vector $y \in \rb^n$ defined as $y_i = f(x_i)$, as $y = \Phi \nu + K \lambda$, with $\Phi \in \rb^{n \times m}$ the matrix with all $\varphi(x_i)$, $i=1,\dots,n$, and $K \in \rb^{n \times n}$ the kernel matrix associated with $E_d^{(\alpha)}$.
We add the constraint $\Phi^\top \lambda = 0$. Being conditionally positive means that for $\rho$ large enough, $K + \rho \Phi \Phi^\top$ is positive semi-definite. In this paper, we provide an explicit $\rho \Phi \Phi^\top$ that makes this happen when the data are constrained in $\B^d(R)$.
Note that when  $\Phi^\top \lambda = 0$, the polynomial part of our kernel becomes irrelevant.

The algorithm above requires solving an optimization problem of dimension $n$, while we will see below how this can be reduced using random features.

\subsection{Equivalence with Sobolev space}

As shown in Prop.~\ref{prop:closedformd}, our kernel is equal to
$$
k_d^{(\alpha)}(x,y) = k_d^{(\alpha),({\rm pol})}(x,y) +c_d^{(\alpha)} \| x- y \|_2^{2\alpha+1}.
$$
We will show that the RKHS norm is equivalent to $\Omega_d^{(\alpha)({\rm eq})}(f)^2$ defined as the minimal value of
\BEQ
\label{eq:Aeq}
  \frac{1}{  R  } \frac{1}{b_d^{(\alpha)} (2\pi)^d}   \int_{\rb^d} \| \omega\|_2^{d+1+2\alpha} | \hat{g}(\omega)|^2 d\omega + 
\frac{\ws^2}{R^{2\alpha}} \int_{\B^d(R)} |g(x)|^2 dx,
\EEQ
over all functions $g: \rb^d \to \rb$ that is equal to $f$ on $\B^d(R)$, for a well-chosen constant $\ws$.

\paragraph{Norm comparisons.} We consider two positive constants $\ws$ and $\bs$ such that:
\BIT
\item For any polynomials $P$ of degree less than $\alpha$, we have
$ \ds \Omega_d^{(\alpha)}(P) \leqslant \frac{\ws}{ R^\alpha} \| P \|_{L_2(\B^d(R))}$ for a positive constant $\ws$. Such a constant exists because two kernels defining a norm on the finite-dimensional space must have equivalent norms.
We currently do not have an explicit upper bound on the constant~$\ws$.

\item For any $f$ in the RKHS defined by $k_d^{(\alpha)}$, 
$ \ds  \| f \|_{L_2(\B^d(R))} \leqslant R^\alpha \bs \Omega_d^{(\alpha)}(f) $. This has to exist because, for any $x \in \B^d(R)$, we have, like for all RKHSs, $f(x)^2 \leqslant  k_d^{(\alpha)}(x,x) \cdot \Omega_d^{(\alpha)}(f)^2$. We thus have, by integration,
 $\bs^2 R^{2\alpha} \leqslant {\rm vol}(\B^d(R))  \sup_{x \in \B^d(R)} k_d^{(\alpha)}(x,x) \leqslant {\rm vol}(\B^d(R)) \frac{1}{2} ( 2R)^{2\alpha} $ by \eq{CC}, and thus 
$ \bs^2 \leqslant {\rm vol}(\B^d(R)) 2^{2\alpha-1}$.

\EIT
 Note that we must have $\ws \bs \geqslant 1$. We now prove the equivalence.
 
 \begin{proposition}[RKHS norm for $d\geqslant 1$]
 For $f \in L_2(\B^d(R))$, we have:
 $$
 \frac{1}{  \ws  \bs \sqrt{2} } \Omega_d^{(\alpha)({\rm eq})}(f)
\leqslant \Omega_d^{(\alpha)}(f)  \leqslant  2\sqrt{2} \, \ws  \bs  \, \Omega_d^{(\alpha)({\rm eq})}(f).
 $$
 \end{proposition}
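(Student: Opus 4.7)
The plan is to decompose the RKHS norm along the kernel decomposition $k_d^{(\alpha)} = k_d^{(\alpha),({\rm pol})} + (c_d^{(\alpha)}/R)\|\cdot-\cdot\|_2^{2\alpha+1}$ from Prop.~\ref{prop:closedformd}. The polynomial piece is positive definite with RKHS equal to all polynomials of degree $\leq \alpha$ on $\B^d(R)$ (Appendix~\ref{app:pold}), while the second piece $E_d^{(\alpha)}/R$ is conditionally positive definite of order $\alpha$ (Section~\ref{sec:splines}). Since the first kernel exactly spans the null space of the second, any $f$ in the RKHS has a canonical decomposition $f = P + h$ with $P$ polynomial of degree $\leq \alpha$ and $h$ a ``pure spline'' element, and
\[
\Omega_d^{(\alpha)}(f)^2 = \|P\|_{\rm pol}^2 + \|h\|_{\rm spl}^2,
\]
where $\|h\|_{\rm spl}^2$ equals, up to the scaling matching the Fourier term of $\Omega_d^{(\alpha)({\rm eq})}$, the infimum over extensions $\tilde h$ of $h$ to $\rb^d$ of $\int \|\omega\|_2^{d+1+2\alpha}|\widehat{\tilde h}(\omega)|^2 d\omega$. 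Crucially, this Fourier semi-norm is invariant under adding a polynomial of degree $\leq \alpha$, as the weight $\|\omega\|_2^{d+1+2\alpha}$ annihilates distributions supported at the origin of order $\leq \alpha$.

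For the lower bound $\Omega_d^{(\alpha)({\rm eq})}(f) \leq \sqrt{2}\,\ws\bs\,\Omega_d^{(\alpha)}(f)$, I would take the canonical decomposition $f = P + h$ together with an extension $\tilde h$ of $h$ nearly realizing $\|h\|_{\rm spl}^2$; then $g = P + \tilde h$ extends $f$ to $\rb^d$, its Fourier piece in $\Omega_d^{(\alpha)({\rm eq})}$ is controlled by $\|h\|_{\rm spl}^2 \leq \Omega_d^{(\alpha)}(f)^2$ (via polynomial invariance), and its $L_2$ piece $(\ws^2/R^{2\alpha})\|f\|_{L_2(\B^d(R))}^2$ is bounded by $\ws^2\bs^2\,\Omega_d^{(\alpha)}(f)^2$ by the defining property of $\bs$; summing and using $\ws\bs \geq 1$ yields the claimed constant. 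For the upper bound $\Omega_d^{(\alpha)}(f) \leq 2\sqrt{2}\,\ws\bs\,\Omega_d^{(\alpha)({\rm eq})}(f)$, I would start from an extension $g$ nearly optimal for $\Omega_d^{(\alpha)({\rm eq})}(f)$ together with the canonical decomposition $f = P + h$, control the spline piece by the Fourier integral of $g - P$ (which equals that of $g$), and bound the polynomial piece via $\|P\|_{\rm pol} \leq (\ws/R^\alpha)\|P\|_{L_2}$, using $\|P\|_{L_2} \leq \|f\|_{L_2} + \|h\|_{L_2}$ and the pointwise RKHS bound $\|h\|_{L_2} \leq R^\alpha\bs\,\|h\|_{\rm spl}$ to absorb the triangle inequality term into the Fourier piece, which produces the extra factor of $2$.

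The main technical obstacle is justifying the canonical decomposition $f = P + h$ rigorously when the spline kernel is only conditionally positive definite; this requires the representer theorem from Section~\ref{sec:splines} and a careful identification of the spline RKHS semi-norm with the Fourier integral at the exact scaling in $R$ and $b_d^{(\alpha)}$. Once this identification is in hand, the rest reduces to routine applications of the defining properties of $\ws$ and $\bs$ together with the polynomial-shift invariance of the Fourier semi-norm.
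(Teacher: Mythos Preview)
Your approach is essentially the same as the paper's: both proofs use the spline-representer decomposition $f=P+h$ with $h=\int k_d^{(\alpha)}(\cdot,y)\,d\lambda(y)$ and $\lambda$ orthogonal to degree-$\leqslant\alpha$ polynomials, identify $\Omega_d^{(\alpha)}(h)^2=\int\!\!\int (E_d^{(\alpha)}/R)\,d\lambda\,d\lambda$ with the (rescaled) minimal Fourier semi-norm over extensions, and then control the polynomial piece via $\ws$ and the $L_2$-terms via $\bs$ exactly as you outline.

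One caution on your orthogonal decomposition. The orthogonality $\langle P,h\rangle_{\H}=\int P\,d\lambda=0$ does hold (by the reproducing property and $\lambda\perp$ polynomials), so $\Omega_d^{(\alpha)}(f)^2=\Omega_d^{(\alpha)}(P)^2+\Omega_d^{(\alpha)}(h)^2$ is valid. But you should not identify $\Omega_d^{(\alpha)}(P)$ with the norm of $P$ in the RKHS of the \emph{polynomial kernel} $k_d^{(\alpha),(\rm pol)}$; these differ in general. For instance, for $\alpha=0$, $d=1$, the constant function $1$ has $\Omega_1^{(0)}(1)^2=4$ by Prop.~\ref{prop:d1norm}, while $\|1\|^2_{k_{\rm pol}}=2$ since $k_1^{(0),(\rm pol)}\equiv 1/2$. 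The reason is that the conditionally positive definite piece $E_d^{(\alpha)}/R$ does not define a genuine RKHS, so Aronszajn's sum formula for the norm does not apply. Since $\ws$ is defined in the paper via $\Omega_d^{(\alpha)}(P)$, your argument goes through unchanged once you read your $\|P\|_{\rm pol}$ as $\Omega_d^{(\alpha)}(P)$. The paper sidesteps this issue by never invoking the orthogonal splitting and using only the triangle inequality $\Omega_d^{(\alpha)}(f)\leqslant\Omega_d^{(\alpha)}(P)+\Omega_d^{(\alpha)}(f-P)$, which is why it lands on the constant $2\sqrt{2}\,\ws\bs$; your use of the (corrected) Pythagorean identity would actually yield a slightly sharper constant in the upper bound.
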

 \begin{proof}
For the upper-bound, we  consider a function $g$ attaining the minimization problem defining  $\Omega_d^{(\alpha)({\rm eq})}(f)^2$ in \eq{Aeq}.

 From \mysec{splines}, we can express $f$ as $\ds f(x) = \int_{\B^d(R)}k_d^{(\alpha)}(x,y) d \lambda(y) + P(x)$, where $\lambda$ is a Radon measure on $\B^d(R)$, such that $\ds \int_{\B^d(R)} \!\! Q(y) d\lambda(y)=0$ for all polynomials $Q$ of degree less than $\alpha$. Moreover, since we have a minimum norm representation, we get, 
\BEAS
\frac{1}{  b_d^{(\alpha)} } \frac{1}{(2\pi)^d}  \int_{\rb^d} \| \omega\|_2^{d+1+2\alpha} | \hat{g}(\omega)|^2 d\omega &\geqslant & \int_{\B^d(R)} \int_{\B^d(R)}E_d^{(\alpha)}(x-y) d \lambda(y)  d \lambda(x)
 \\ & = &  
  \int_{\B^d(R)} \int_{\B^d(R)}k_d^{(\alpha)}(x,y) d \lambda(y)  d \lambda(x).\EEAS
The last quantity is equal to $\Omega_d^{(\alpha)}(f - P )^2$ because of the reproducing property of kernels.

 The polynomial $P$ can be expressed in the RKHS because of the part $k_d^{(\alpha),({\rm pol})}(x,y)$. Therefore $f$ is in the RKHS.
We thus only need to show that the RKHS norm of $P$ is less than its $L_2$-norm on $\B^d(R)$, since then the RKHS norm of $f$ is less than a constant times $\Omega_d^{(\alpha)({\rm eq})}(f)$.

  The $L_2$-norm of $P$ on $\B^d(R)$ is less than 
the $L_2$-norm of $g$ (which is the one of $f$) plus the $L_2$-norm of the function  $\ds x \mapsto  \int_{\B^d(R)}k_d^{(\alpha)}(x,y) d \lambda(y)$, which is less than $  {\bs}{ R^\alpha}\Omega_d^{(\alpha)}(f-P)$ by definition of $\bs$.

 Thus, since $ \Omega_d^{(\alpha)}(P) \leqslant \frac{\ws}{ R^\alpha} \| P \|_{L_2(\B^d(R))}$ by definition of $\ws$, we get:
\BEAS
\Omega_d^{(\alpha)}(f)
& \leqslant & \Omega_d^{(\alpha)}(P) + \Omega_d^{(\alpha)}(f-P)   
\leqslant  \frac{\ws}{ R^\alpha} \| P \|_{L_2(\B^d(R))}   + \Omega_d^{(\alpha)}(f-P)  \\
& \leqslant & \frac{\ws}{ R^\alpha} \Big(
\| f\|_{L_2(\B^d(R))} +   \bs R^{\alpha} \Omega_d^{(\alpha)}(f-P)  \Big) + \Omega_d^{(\alpha)}(f-P) 
\\
& \leqslant & \frac{\ws}{ R^\alpha} \| f\|_{L_2(\B^d(R))} + \ws \bs \Omega_d^{(\alpha)}(f-P)  + \Omega_d^{(\alpha)}(f-P) 
\\
& \leqslant & 2  \ws \bs     \bigg(\frac{1}{  b_d^{(\alpha)} } \frac{1}{(2\pi)^d R }  \int_{\rb^d} \| \omega\|_2^{d+1+2\alpha} | \hat{f}(\omega)|^2 d\omega\bigg)^{1/2}   + \frac{\ws}{ R^\alpha}  
\| f\|_{L_2(\B^d(R))}  .
\EEAS
Thus $\Omega_d^{(\alpha)}(f)^2 \leqslant  8 \ws^2 \bs^2 \Omega_d^{(\alpha)({\rm eq})}(f)^2$.

For the lower-bound, given $\ds f = \int_{\B^d(R)}  k_d^{(\alpha)}(x,y)d\lambda(y)$ in the RKHS, the extension $g$ that minimizes the squared norm 
$\ds \frac{1}{  b_d^{(\alpha)} } \frac{1}{(2\pi)^d} \int_{\rb^d} \| \omega\|_2^{d+1+2\alpha} | \hat{g}(\omega)|^2 d\omega $, is the one that can be written
$\ds g = \int_{\B^d(R)}  k_d^{(\alpha)}(x,y)d\mu(y)$, with $\mu$ orthogonal to polynomials, and $\ds \int_{\B^d(R))}
\int_{\B^d(R))} k_d^{(\alpha)}(x,y) d\mu(x) d\mu(y)$ minimized. By introducing the measure $\bar{\lambda}$ obtained by projecting on the orthogonal to all polynomials of degree less than $\alpha$, we have:
\BEAS
\int_{\B^d(R))}
\int_{\B^d(R))} k_d^{(\alpha)}(x,y) d\mu(x) d\mu(y)
& \leqslant & 
\int_{\B^d(R))}
\int_{\B^d(R))} k_d^{(\alpha)}(x,y) d\bar{\lambda} (x)  d\bar{\lambda} (y) \\
& \leqslant & \int_{\B^d(R))}
\int_{\B^d(R))} k_d^{(\alpha)}(x,y) d\lambda(x) d\lambda(y),
\EEAS
which is equal to $ \Omega_d^{({\alpha})}(f)^2.$ Thus,   we get:
\BEAS
\Omega_d^{(\alpha)({\rm eq})}(f)^2
& \leqslant & 
 \frac{1}{  b_d^{(\alpha)} } \frac{1}{(2\pi)^d R}  \int_{\rb^d} \| \omega\|_2^{d+1+2\alpha} | \hat{g}(\omega)|^2 d\omega    + \frac{\ws^2}{R^{2\alpha}}
\| g\|_{L_2(\B^d(R))}^2   \\
&
\leqslant &   \Omega_d^{({\alpha})}(f)^2+ \bs^2 \ws^2 \Omega_d^{({\alpha})}(f)^2
\leqslant 2 \bs^2 \ws^2 \Omega_d^{({\alpha})}(f)^2,
\EEAS
which leads to the desired norm equivalence.
  \end{proof}

\subsection{Two competing random feature expansions for $\alpha=0$}

We can first consider the random feature expansion obtained from neural networks in Prop.~\ref{prop:closedformd}, but also a classical one based on the Fourier transform~\cite{rahimi2008random}. We indeed have, for $w$ uniform on the sphere $\S^d$:
 \BEAS
 k_d^{(\alpha)}(x,y) & = &  \E_w \big[ k_1^{(\alpha)}(w^\top x, w^\top y) \big] \\
 & = & k_d^{(\alpha),({\rm pol})}(x,y) + \frac{1}{R} \E_w \big[ c_1^{(\alpha)}| w^\top (x-y) |^{2\alpha+1}\big]
 = \frac{1}{2} - \frac{1}{4R} \E_w \big[ | w^\top (x-y) | \big],
 \EEAS
 for $\alpha=0$.
 Since $|w^\top ( x - y) | \leqslant 2 R$ almost surely (because $x,y \in [-R,R]$), for $\alpha=0$, we can use the one-dimensional Fourier transform of the function:
 $$
 \varphi: u \mapsto  (\frac{1}{2} - \frac{1}{4R} |u| \big) 1_{|u| \leqslant 2R},
 $$
  which is equal to
 $$
 \hat{\varphi}(\omega) =  
 \frac{  \sin^2(R\omega)}{ R \omega^2 } .
 $$
 We thus have, for $\| x-y\|_2 \leqslant 2R$,
 \BEAS
  k_d^{(0)}(x,y) & = &  \E_{w} \Big[
  \frac{1}{2\pi R } \int_{-\infty}^{+\infty}  \frac{  \sin^2(R\tau)}{R  \tau^2 } e^{i\tau w^\top (x-y) } d\tau
  \Big] \\
  & = & \frac{2}{  {\rm vol}(\S^{d-1})}  \frac{1}{2\pi} 
\int_{\S^{d-1}}   \int_{0}^{+\infty}  \frac{  \sin^2(R\tau)}{R \tau^{d+1} } e^{i\tau w^\top (x-y) } \tau^{d-1} d\tau dw 
\\
  & = & \frac{2}{  {\rm vol}(\S^{d-1})}  \frac{1}{2\pi R } 
\int_{\rb^d}   \frac{  \sin^2(R\|\omega\|_2)}{ \|\omega\|_2 ^{d+1} } e^{i\omega^\top (x-y) } d\omega
\mbox{ using the change of variable } \omega = \tau w,
\\
 & = & \frac{  \Gamma(d/2)}{  \pi^{d/2}}  \frac{1}{2\pi R } 
\int_{\rb^d}   \frac{  \sin^2(R\|\omega\|_2)}{ \|\omega\|_2 ^{d+1} } e^{i\omega^\top (x-y) } d\omega\\
 & = &  \frac{1}{(2\pi)^d} 
\int_{\rb^d}   \frac{  \Gamma(d/2) (2\pi)^{d-1}}{  \pi^{d/2} R }   \frac{  \sin^2(R\|\omega\|_2)}{ \|\omega\|_2 ^{d+1} } e^{i\omega^\top (x-y) } d\omega.
  \EEAS
  In other words, the Fourier transform of the function 
  $x \mapsto  (\frac{1}{2} - \frac{1}{R} c_d^{(0)} \|x\|_2 \big) 1_{\|x\|_2 \leqslant 2R}$ is equal
  to $\omega \mapsto \frac{  \Gamma(d/2) (2\pi)^{d-1}}{  \pi^{d/2} R }   \frac{  \sin^2(R\|\omega\|_2)}{ \|\omega\|_2 ^{d+1} }$.
 This leads naturally to the random feature
 $ 
 \cos (\omega^\top x + b)
  $
 with $b$ uniform in $[-\pi,\pi]$ and $\omega$ sampled from the distribution 
with density  $ \ds \frac{2}{(2\pi)^d} 
   \frac{  \Gamma(d/2) (2\pi)^{d-1}}{  \pi^{d/2} R }   \frac{  \sin^2(R\|\omega\|_2)}{ \|\omega\|_2 ^{d+1} } $, which corresponds to $\omega = \tau w$, with $w$ uniform on the sphere and $\tau$ sampled from 
   $\frac{    \sin^2(R\tau)}{ \pi R \tau^2 } $, which can  be done by sampling from 
a Cauchy distribution and using rejection sampling. We can also consider $b$ taking values $0$ and $\pi/2$ uniformly, that is, with random features $\cos(\omega^\top x)$ and $\sin (\omega^\top x)$.

\paragraph{Empirical comparison for $d=1$.}
We compare the two random feature expansions, first visually in \myfig{ex}, then numerically in \myfig{plots}, showing that the random feature expansion based on neural networks has better approximation properties.\footnote{Matlab code to reproduce all figures is available at \url{https://www.di.ens.fr/~fbach/neural_splines_online.zip}.}

\begin{figure}[h]
\begin{center}
\includegraphics[scale=.4]{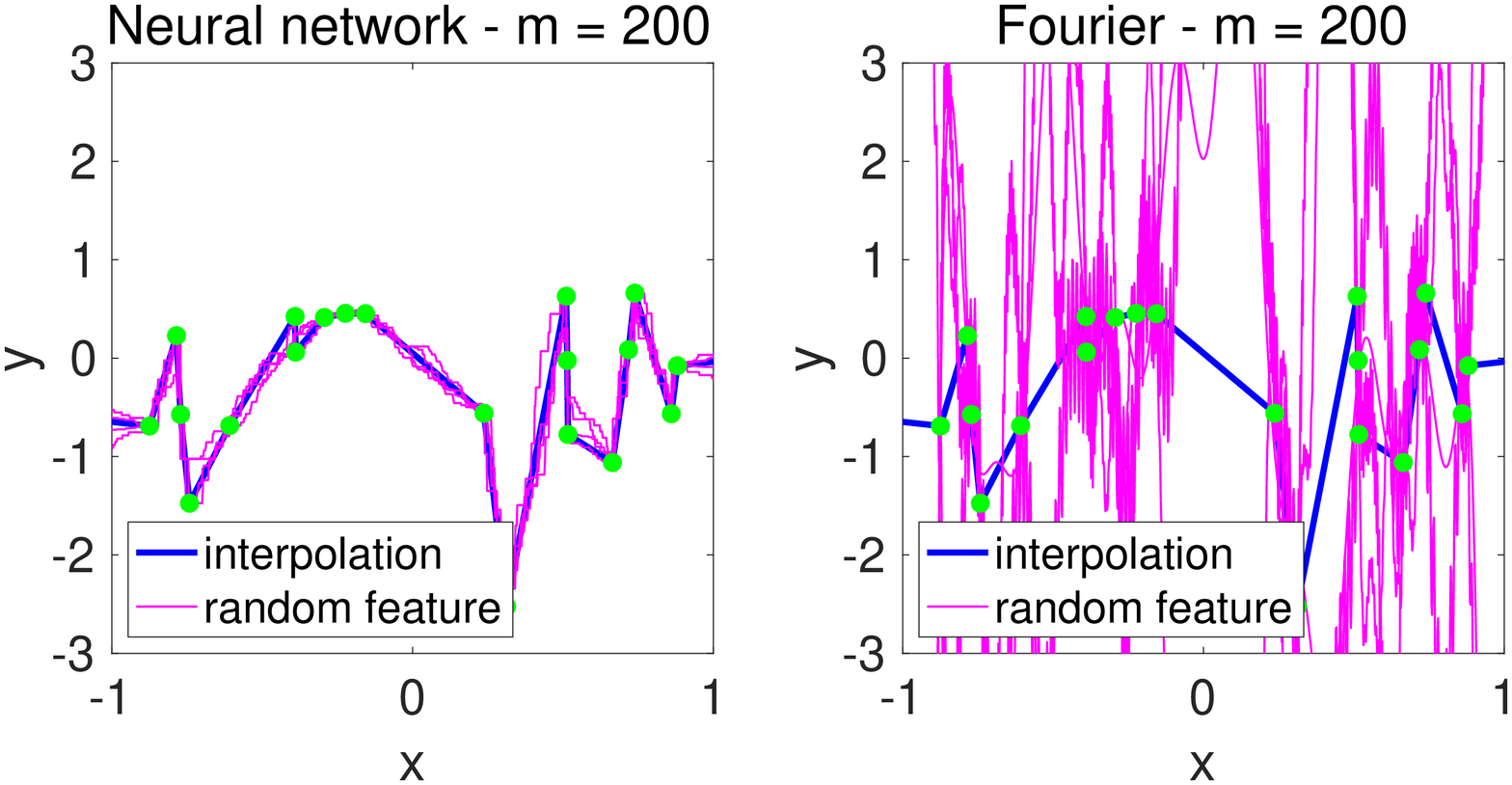}
\end{center}

\vspace*{-.25cm}

\caption{Minimum norm interpolation of the green points by the full RKHS (in blue) and the random feature expansion, for $m=200$. Left: neural network expansion, right: Fourier expansion. Four different draws of the random features are plotted. \label{fig:ex}}
\end{figure}

\begin{figure}[h]
\begin{center}
\includegraphics[scale=.4]{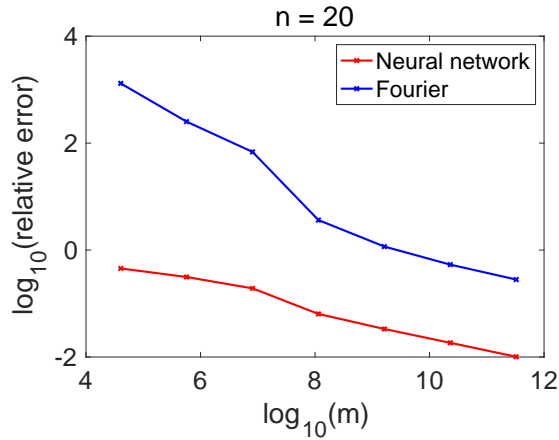}
\end{center}

\vspace*{-.35cm}

\caption{Estimation of the minimum interpolation of the full RKHS by random feature expansions for different values of $m$ (number of random features).  We sample $n=20$ points $x_1,\dots,x_n$ uniformly in $[-1,1]$ as well as $n$ random labels $y_1,\dots,y_n$ from a standard Gaussian distribution. We then compare the minimum interpolation fits with the $L_2([-1,1])$-norm for the full kernel and the random feature approximations. This is averaged over 20 replications for the choice of the input points, and with infinitely many replications for the labels (as the expectation can be taken in closed form): given test points $x'_1,\dots,x'_m$, and the training and testing kernel matrices $K$ and $K'$, together with their approximations $\hat{K}$ and $\hat{K}'$, the error is proportional to $\| K' K^{-1} y - \hat{K}' \hat{K}^{-1} y \|_2^2$, and we can thus compute the expectation with respect to $y$, which is equal to $\| K' K^{-1}   - \hat{K}' \hat{K}^{-1}   \|_F^2$, which is the quantity we plot above. \label{fig:plots}}
\end{figure}

\paragraph{Comparison of leverage scores for $d=1$.}
We want to compare the two random feature expansions, which are of the form
$$
k(x,y) = \E_{v} [ \varphi(x,v)\varphi(y,v)],
$$
for a feature  $\varphi(x,v) \in \rb$ for $v \in \mathcal{V}$. As described in \cite[Section 4]{bach2017equivalence}, to assess the capacity of random feature expansions to approximate the initial function space, a key quantity is the ``leverage score''
$$
v \mapsto \langle \varphi(\cdot,v), ( \Sigma + \lambda \idm)^{-1} \varphi(\cdot,v) \rangle_{L_2(\B^d(R))},
$$
where $\Sigma = \E_{v} \Big[ \varphi(\cdot,v) \otimes_{L_2(\B^d(R))} \varphi(\cdot,v) \Big]$ is an integral operator on ${L_2(\B^d(R))}$. The maximal leverage score over $v \in \mathcal{V}$ has a direct influence on the number of needed random features to get a $\lambda$-approximation in $L_2$-norm of the RKHS ball of the original RKHS: from \cite[Prop.~1]{bach2017equivalence}, up to logarithmic terms, the maximal leverage score is proportional to the number $m$ of necessary random features.

In Appendix~\ref{app:ls}, we compute these leverage scores explicitly for $d=1$, and we see that for the neural network features, the maximal leverage score diverges as $1/\sqrt{\lambda}$ when $\lambda$ tends to zero, while for random Fourier features, they diverge faster as $1/\lambda$, explaining the empirical superiority seen above.

\subsection{A new random feature expansions for  all $\alpha$}
For all $\alpha$, we provided a new kernel $k_d^{(\alpha)}$ that makes the classical multivariate spline positive-definite, together with a random feature expansion, that can be used for efficient estimation. See~\cite{rudi2017generalization} for an analysis.

Other kernels lead to RKHS norms that are equivalent to the same Sobolev norm, such as the Mat\'ern kernels~\cite[page 84]{williams2006gaussian}. These have a natural random Fourier feature expansion (with empirically the same behavior as shown for $\alpha=0$ above), while ours are based on neural networks, with a better behavior when used within random feature expansions.

 \section{Conclusion}
In this paper, we provided new random feature expansions for kernels associated with splines, leading to better properties for Sobolev space on the Euclidean balls than existing expansions based on the Fourier transform. As done by~\cite{bach2017breaking} with feature expansions based on spherical harmonics, this link could be used to provide explicit approximation bounds for neural networks for a large number of neurons (where input weights are also estimated).

\subsubsection*{Acknowledgements}
We thank Nicolas Le Roux and Alessandro Rudi for the interesting discussions related to this work.
 We  acknowledge support from the French government under the management of the Agence Nationale de la Recherche as part of the ``Investissements d’avenir'' program, reference
ANR-19-P3IA-0001 (PRAIRIE 3IA Institute), as well as from the European Research Council
(grant SEQUOIA 724063).

\appendix

\section{A few lemmas about uniform distributions on the sphere }
\label{app:formulas}
If $w$ is uniform on the unit sphere, then:
\BEAS
\E [ |w^\top z |^{2\alpha+1} ]
& = &  \| z \|_2^{2\alpha+1}  \frac{ \Gamma(1+\alpha) \Gamma(\frac{d}{2})}{\Gamma(\frac{1}{2}) \Gamma(\frac{d}{2}+ \frac{1}{2} + \alpha)}  \\
\E [ (w^\top z )^{2\alpha} ] 
& = &  \| z \|_2^{2\alpha} \frac{ \Gamma(\frac{1}{2}+\alpha) \Gamma(\frac{d}{2})}{\Gamma(\frac{1}{2}) \Gamma(\frac{d}{2}+   \alpha)} \\
\E [ (w^\top z )^{2 } ] 
& = &  \| z \|_2^{2 }  / d\\
\E [ z^\top ww^\top t] & = & \frac{1}{d} z^\top t
\\
\E [ ( z^\top ww^\top t )^2] & = & \frac{1}{d(d+2)}\big[ 2  (z^\top t)^2 + z^\top z \cdot t^\top t \big].
\EEAS
This is obtained from $w_1^2$ having a Beta distribution with parameters $(\frac{1}{2},\frac{d-1}{2})$, and using invariance by rotation.

\section{Proof for expressions of RKHS norms, $d=1$}
\label{app:proof_d1}

\subsection{Proof of Proposition~\ref{prop:d1}}
\label{app:proofd1}

 We have
$\ds
\int_{-R}^R ( x-b)_+^\alpha(y-b)^\alpha_+ db = \int_{R}^{\min\{x,y\}}  ( x-b)^\alpha(y-b)^\alpha db,
$
which we can reformulate with $s = \frac{x+y}{2}$ and $\delta = \frac{x-y}{2}$, leading to $x = s + \delta$ and $y = s - \delta$, with $\min\{x,y\} = s - |\delta|$. We get:
\BEAS
\int_{-R}^R ( x-b)_+^\alpha(y-b)^\alpha_+ db & = & \int_{-R}^{s-|\delta|}  ( s+\delta-b)^\alpha(s-\delta-b)^\alpha db 
= \int_{-R}^{s-|\delta|} (   (s-b)^2 - \delta^2 )^\alpha  db
\\
& = & \int_{-R}^{s-|\delta|}   \sum_{i=0}^\alpha { \alpha \choose i}  (s-b)^{2i} (-1)^{i-\alpha}   \delta^{2\alpha - 2i}db
\\
& = &    \sum_{i=0}^\alpha { \alpha \choose i}  \frac{1}{2i+1} \Big[ (R+s)^{2i+1} - |\delta|^{2i+1} \Big]   (-1)^{i-\alpha}   \delta^{2\alpha - 2i}\\
& = &    \sum_{i=0}^\alpha { \alpha \choose i}  \frac{ (-1)^{i-\alpha} }{2i+1}  (R+s)^{2i+1}       \delta^{2\alpha - 2i} 
- \sum_{i=0}^\alpha { \alpha \choose i}  \frac{ (-1)^{i-\alpha} }{2i+1} \times   |\delta|^{2\alpha+1} 
\\
  & = & A_\alpha(x,y) - B_\alpha | x - y|^{2 \alpha+1}     ,
\EEAS
with 
\BEAS
  A_\alpha
  & = & \sum_{i=0}^\alpha { \alpha \choose i}  \frac{ (-1)^{i-\alpha} }{2i+1}  (R+s)^{2i+1}       \delta^{2\alpha - 2i} 
=  \int_{-y}^R  ( x+b)^\alpha(y+b)^\alpha db \\
 B_\alpha & = &  \frac{1}{2^{2\alpha+1}}\sum_{i=0}^\alpha { \alpha \choose i}  \frac{ (-1)^{i-\alpha} }{2i+1}
 = \frac{1}{2^{2\alpha+1}} \int_0^1 \sum_{i=0}^\alpha { \alpha \choose i}  (-1)^{i-\alpha} x^{2i} dx \\
 & = & \frac{(-1)^\alpha}{2^{2\alpha+1}} \int_0^1 ( 1 - x^2)^\alpha dx
 = \frac{(-1)^\alpha}{2^{2\alpha+2}} 2^{2\alpha+1} \int_{0}^1u^\alpha (1-u) ^\alpha du =  \frac{(-1)^\alpha}{2} \frac{ \Gamma(\alpha+1)^2}{\Gamma(2\alpha + 2) },
\EEAS
using the change of variable $\frac{1+x}{2} = u$, $\frac{1-x}{2} = 1 - u$.
This leads to, using symmetries:
\BEAS
k_d^{(\alpha)}(x,y) & = & \frac{1}{4R}\int_{-y}^R    ( x+b)^\alpha(y+b)^\alpha db
+\frac{1}{4R}\int_{y}^R ( -x+b)^\alpha(-y+b)^\alpha db
 \\
 & & \hspace*{8cm} -  \frac{(-1)^\alpha}{4R} \frac{ \Gamma(\alpha+1)^2}{\Gamma(2\alpha + 2) }|x-y|^{2\alpha+1}\\
 & = & \frac{1}{4R}\int_{-y}^R    ( x+b)^\alpha(y+b)^\alpha db
+\frac{1}{4R}\int_{-R}^{-y} ( -x-b)^\alpha(-y-b)^\alpha db
 \\
 & & \hspace*{8cm} -  \frac{(-1)^\alpha}{4R} \frac{ \Gamma(\alpha+1)^2}{\Gamma(2\alpha + 2) }|x-y|^{2\alpha+1}\\
& = & \frac{1}{4R}\int_{-R}^R    ( x+b)^\alpha(y+b)^\alpha db
 -  \frac{(-1)^\alpha}{4R} \frac{ \Gamma(\alpha+1)^2}{\Gamma(2\alpha + 2) }|x-y|^{2\alpha+1}.
 \EEAS
We can then expand using the binomial formula.

\subsection{Proof of Proposition~\ref{prop:d1norm}}
\label{app:proofd1norm}

If we have the representation, for $f$ with $\alpha+1$ continuous derivatives:
$$
\forall x \in [-R,R], f(x) = \frac{1}{4R} \int_{-R}^R \big[ \eta_+(b) (x-b)_+^\alpha 
+\eta_-(b) (b-x)_+^\alpha  \big]   db,
$$
then by taking the $(\alpha+1)$-derivative, we must have:
$$
f^{(\alpha+1)}(x)= 
 \frac{\alpha! }{4R}  \eta_+(x) 
+ \frac{\alpha! }{4R}  (-1)^{\alpha+1} \eta_-(x) .
$$
We thus have:
\BEAS
\eta_+(x) & = & \frac{2R}{\alpha!} f^{(\alpha+1)}(x) + c(x) \\
\eta_-(x) & = & (-1)^{\alpha+1}   \frac{2R}{\alpha!} f^{(\alpha+1)}(x) +  (-1)^{\alpha}  c(x) 
\EEAS
for a certain function $c: [-R,R] \to \rb$.
We have from Taylor formula with integral remainder:
\BEAS
f(x) & \!\! =  \!\! &  \sum_{i=0}^\alpha \frac{ f^{(i)}(-R)}{i!} (x+R)^i + \int_{-R}^R \frac{ f^{(\alpha+1)}(b)}{\alpha!} (x-b)_+^\alpha db \\
f(x) & \!\!  =  \!\! &  \sum_{i=0}^\alpha \frac{ (-1)^i f^{(i)}(R)}{i!} (R-x)^i  - (-1)^{\alpha} \int_{-R}^R\frac{ f^{(\alpha+1)}(b)}{\alpha!} (b-x)_+^\alpha db , \mbox{ and by averaging them},\\
f(x) & \!\!  =  \!\! &  \frac{1}{2} \sum_{i=0}^\alpha \Big[
 \frac{ f^{(i)}(-R)}{i!} (x+R)^i  + 
\frac{ (-1)^i f^{(i)}(R)}{i!} (R-x)^i  
\Big]  \\
& & \hspace*{5cm} + \frac{1}{2}\int_{-R}^R\frac{ f^{(\alpha+1)}(b)}{\alpha!} \big[ (x-b)_+^\alpha -   (-1)^{\alpha} (b-x)_+^\alpha \big] db.
\EEAS
Given our expression for $\eta_+$ and $\eta_-$,  this implies that  for all $x \in [-R,R]$
\BEAS
  \frac{1}{2} \sum_{i=0}^\alpha \Big[
 \frac{ f^{(i)}(-R)}{i!} (x+R)^i  + 
\frac{ (-1)^i f^{(i)}(R)}{i!} (R-x)^i  
\Big] 
& = & \frac{1}{4R} \int_{-R}^R \big[ c(b) (x-b)_+^\alpha +  (-1)^{\alpha}  c(b) (b-x)_+^\alpha ] db \\
& = & \frac{1}{4R} \int_{-R}^R c(b) (x-b)^\alpha db,
\EEAS
leading to constraints on $\ds \int_{-R}^R c(b)b^i$ for $i \in \{0,\dots,\alpha\}$.
The optimal $c$ is obtained by minimizing:
$$
\frac{1}{4R} \int_{-R}^R \big[ \eta_+(b)^2
+ \eta_-(b)^2\big] db = \frac{2R}{\alpha!^2} \int_{-R}^R f^{(\alpha+1)}(b)^2 db + 
\frac{1}{2R} \int_{-R}^R c(b)^2 db.
$$ 
Thus $c$ has to be a polynomial of degree less than $\alpha$, with coefficients which are linear combinations of $f^{(i)}(\pm R)$ for $i \in \{0,\dots,\alpha\}$. This leads to the desired result.

\subsection{Polynomial kernel in one dimension}
\label{app:pol}
Given a polynomial $P$ on $\rb$ of degree less than $\alpha$ (or equal), if we can write it as:
\BEQ
\label{eq:S}
P(x) = \frac{1}{2R} \int_{-R}^R \eta(b) ( x-b)^\alpha db,
\EEQ
then its squared RKHS norm (for $k_1^{(\alpha),({\rm pol})})$ is equal to the infimum of $\ds \frac{1}{4R}  \int_{-R}^R \eta(b)^2db
$. Given the representation in \eq{S}, we have:
$$
P^{(k)}(0) = \frac{1}{2R} \frac{\alpha!}{(\alpha-k)!}\int_{-R}^R \eta(b)   (-b)^{\alpha-k} db
 = \frac{(-1)^{\alpha-k}}{2R} \frac{\alpha!}{(\alpha-k)!}\int_{-R}^R  \eta(b) b^{\alpha-k} db
,$$
which is equal to $\ds \frac{(-1)^{\alpha-k}}{2R} \frac{\alpha!}{(\alpha-k)!} \langle \eta, Q_{\alpha-k} \rangle_{L_2(\B^d(R))}
 $, where $Q_j(b) = b^j$.
 Thus, given that we want to minimize $ \langle \eta, \eta \rangle_{L_2(\B^d(R))}$, the solution has to be a polynomial
 $\eta  = \sum_{j=0}^\alpha s_j  Q_j$, with $s \in \rb^{\alpha+1}$ minimizing 
 $$
 \sum_{i,j=0}^\alpha s_i s_j\langle Q_i, Q_j \rangle_{L_2(\B^d(R))}
 $$
such that $\ds (-1)^{j} 2R   \frac{j!}{\alpha!}  P^{(\alpha-j)}(0) =  \sum_{i=0}^\alpha s_{i}  \langle Q_i, Q_{j} \rangle_{L_2(\B^d(R))}$ for all $j \in \{0,\dots,\alpha\}$. If $P = \sum_{j=0}^\alpha t_j Q_j$, we obtain
$\ds (-1)^{j} 2R   { \alpha \choose j}^{-1} t_{\alpha-j} =  \sum_{i=0}^\alpha s_{i}  \langle Q_i, Q_{j} \rangle_{L_2(\B^d(R))}$.
Since the Gram matrix of the monomials is invertible, the optimal $s$ is a linear function of the coefficients $t$. Thus the norm of $P$ is a positive-definite quadratic form in the coefficients. Hence the norm is equivalent to the $L_2$-norm on the space of polynomials of degree less than $\alpha$ (or equal).

\subsection{Polynomial kernel in  dimension $d \geqslant 1$}
\label{app:pold}
We can apply the same reasoning as in the section above and need to show that we can represent all polynomials of degree less than $\alpha$ as 
$$ P(x) = \frac{1}{2R} \int_{-R}^R \int_{\S^d} \eta(w,b) ( w^\top x + b)^\alpha dw db,$$
for $\eta(w,b)$ square integrable. By taking all partial derivatives at $x=0$, this imposes that all
$$
\int_{-R}^R \int_{\S^d} \eta(w,b) w_1^{u_1} \cdots w_d^{u_d} b^{v}  dw db
$$
are fixed, for $u_1+\dots+u_d+v = \alpha$, and since the family of polynomials $(w,b) \mapsto w_1^{u_1} \cdots w_d^{u_d} b^v$ is linearly independent in $L_2(\S^d \times [-1,1])$, the same reasoning above leads to an RKHS norm which is equivalent 
the $L_2$-norm on the space of polynomials of degree less than $\alpha$ (or equal).

\section{Computing leverage scores}
\label{app:ls}

In this section, we explicitly compute leverage scores for $d=1$ and $\alpha=0$ for the two expansions.

\subsection{General solution}

We consider $d=1$, $R=1$, and the classical integral operator for the uniform distribution on $[-1,1]$:
$$
\Sigma f(x) = \frac{1}{2}\int_{-1}^1 k_1^{(0)}(x,y) f(y) dy
= \frac{1}{4} \int_{-1} ^1 f(y)dy - \frac{1}{8} \int_{-1}^1 |x-y| f(y) dy.
$$

Given a function $g \in L_2([-1,1])$, we aim to compute the leverage score:
$$
\frac{1}{2} \int_{-1}^1 g(x) \big[  ( \Sigma + \lambda \idm)^{-1} g \big](x) dx.
$$
We thus compute $f =  ( \Sigma + \lambda \idm)^{-1} g$, which is such that:
$$
g(x) = \frac{1}{4} \int_{-1} ^1 f(y)dy - \frac{1}{8} \int_{-1}^1 |x-y| f(x) dy + \lambda f(x).
$$
By taking two derivatives and using the fact that the second-order derivative of $x \mapsto |x-y|$ is $2\delta_y$, we get:
$$
g''(x) = \lambda f''(x) - \frac{1}{4} f(x).
$$
Once we know a solution $f_0$ for the ordinary differential equation above, then all solutions are obtained as 
\BEQ
\label{eq:F}
f(x) = f_0(x) + A \cosh \frac {x}{2 \sqrt{\lambda}}+B \sinh \frac {x}{2 \sqrt{\lambda}},
\EEQ
for some $A,B \in \rb$.
\paragraph{Obtaining a solution in ``closed-form''.}
We can   solve the ODE in $f$ using standard techniques~\cite{bender1999advanced}, by writing $f(x) = e^{\frac{x}{2 \sqrt{\lambda}}} a(x)$, so that
\BEAS
g''(x) & = &  \lambda  e^{\frac{x}{2 \sqrt{\lambda}}} a''(x) + {\sqrt{\lambda}} e^{\frac{x}{2 \sqrt{\lambda}}} a'(x) 
\\
e^{- \frac{x}{2 \sqrt{\lambda}}} g''(x)  & = & \lambda a''(x) +  {\sqrt{\lambda}}a'(x).
\EEAS
We then write $a'(x) = e^{- \frac{x}{ \sqrt{\lambda}}} c(x)$, so that
$$
e^{- \frac{x}{2 \sqrt{\lambda}}} g''(x)  =  \lambda e^{- \frac{x}{ \sqrt{\lambda}}} c'(x),
$$
and thus
$$
c'(x) = \frac{1}{\lambda} e^{ \frac{x}{2 \sqrt{\lambda}}} g''(x),
$$
leading to a particular solution by integration:
$$
c(x) =    \frac{1}{2\lambda} \int_{-1}^1  e^{ \frac{y}{2 \sqrt{\lambda}}} g''(y) \sign(x-y) dy.
$$
Moreover, we get a particular solution:
\BEAS
a(x) & = &   \frac{1}{2} \int_{-1}^1   e^{- \frac{y}{ \sqrt{\lambda}}} c(y) \sign(x-y) dy\\
& = &   \frac{1}{4\lambda}  \int_{-1}^1   e^{- \frac{y}{ \sqrt{\lambda}}} 
 \int_{-1}^1  e^{ \frac{t}{2 \sqrt{\lambda}}} g''(t) \sign(y-t) dt
 \sign(x-y) dy\\
 f(x)   & = &    \frac{1}{4\lambda} \int_{-1}^1  
 \int_{-1}^1  e^{\frac{x}{2 \sqrt{\lambda}}}  e^{- \frac{y}{ \sqrt{\lambda}}}  e^{ \frac{t}{2 \sqrt{\lambda}}} g''(t) \sign(y-t)  \sign(x-y) dy dt,
\EEAS
with all solutions obtained by adding $A \cosh \frac {x}{2 \sqrt{\lambda}}+B \sinh \frac {x}{2 \sqrt{\lambda}}$.

\paragraph{Finding constants $A$ and $B$.}
We have for the unique solution $f$ of $g = ( \Sigma + \lambda \idm) f$:
  \BEA
\notag  g(1) & = &  \frac{1}{4} \int_{-1} ^1 f(y)dy - \frac{1}{8} \int_{-1}^1 (1-y) f(y) dy + \lambda f(1)
  \\
 \notag   g(-1) & = &  \frac{1}{4} \int_{-1} ^1 f(y)dy - \frac{1}{8} \int_{-1}^1 (1+y) f(y) dy + \lambda f(-1) , \mbox{ leading to}\\
\label{eq:gpg}    g(1)+g(-1) & = &  \frac{1}{4} \int_{-1} ^1 f(y)dy + \lambda [ f(1)+f(-1) ] \\
\label{eq:gmg}   g(1) - g(-1) & = &  \frac{1}{4} \int_{-1} ^1 y  f(y)dy + \lambda [ f(1)-f(-1) ].
  \EEA
For $f(x) = \cosh \frac {x}{2 \sqrt{\lambda}}$, we have:
$\ds
\int_{-1}^1 f(x)dx = 4 \sqrt{\lambda} \sinh  \frac {1}{2 \sqrt{\lambda}}.
$

For $f(x) = \sinh \frac {x}{2 \sqrt{\lambda}}$, we have:
$\ds
\int_{-1}^1 x f(x)dx = 4 \sqrt{\lambda} \cosh  \frac {1}{2 \sqrt{\lambda}}
- 8 \lambda  \sinh  \frac {1}{2 \sqrt{\lambda}}.$

Thus, for our solution in \eq{F}:
\BEAS
   g(1)+g(-1) & = &  \frac{1}{4} \int_{-1} ^1 f_0(y)dy + \lambda [ f_0(1)+f_0(-1) ]  +
   A \Big[ \sqrt{\lambda} \sinh  \frac {1}{2 \sqrt{\lambda}} + 2 \lambda  \cosh \frac {1}{2 \sqrt{\lambda}} \Big]\\
    g(1) - g(-1) & = &  \frac{1}{4} \int_{-1} ^1 y  f_0(y)dy + \lambda [ f_0(1)-f_0(-1) ]
    + B  \sqrt{\lambda} \cosh  \frac {1}{2 \sqrt{\lambda}}.
  \EEAS
  Therefore, to obtain $A$ and $B$, we simply need to compute $f_0(1)$, $f_0(-1)$ as well as
  $\ds \int_{-1} ^1 f_0(y)dy $ and $\ds \int_{-1} ^1 y f_0(y)dy $.

\subsection{Neural networks}
We consider $g(x) = 1_{x>b} = (x-b)_+^0$, we then consider
$f_0(x) = 1_{x>b}\frac{1}{\lambda} \cosh \frac{x-b}{2\sqrt{\lambda}}$.
We have:
\BEAS
g(x) - \lambda f_0(x) &= &  1_{x>b} \big[ 1 - \cosh \frac{x-b}{2\sqrt{\lambda}}  \big] \\
g'(x) - \lambda f_0'(x) & = & - \frac{1}{2\sqrt{\lambda}} 1_{x>b} \sinh \frac{x-b}{2\sqrt{\lambda}} 
\\
g''(x) - \lambda f_0''(x) & = &-  \frac{1}{4 \lambda} 1_{x>b} \cosh \frac{x-b}{2\sqrt{\lambda}} = -\frac{1}{4} f_0(x),
\EEAS
and thus $f_0$ is a particular solution. We have:
\BEAS
f_0(1) + f_0(-1) & = &  f_0(1)-f_0(-1)  = f_0 (1)  = \frac{1}{\lambda}\cosh \frac{1-b}{2\sqrt{\lambda}} \\
 \int_{-1}^1 f_0(x) dx
  & = & \int_{b}^1 
 \frac{1}{\lambda} \cosh \frac{x-b}{2\sqrt{\lambda}} dx 
 = \frac{ 2}{\sqrt{\lambda}} \sinh \frac{1-b}{2\sqrt{\lambda}}  \\
 \int_{-1}^1x  f_0(x) dx & = & \int_{b}^1 
 \frac{1}{\lambda} x \cosh \frac{x-b}{2\sqrt{\lambda}} dx 
 =\int_{b}^1 
 \frac{1}{\lambda} b \cosh \frac{x-b}{2\sqrt{\lambda}} dx 
 + \int_{b}^1 
 \frac{1}{\lambda} (x -b) \cosh \frac{x-b}{2\sqrt{\lambda}} dx  \\
 & = & \frac{ 2 b}{\sqrt{\lambda}} \sinh \frac{1-b}{2\sqrt{\lambda}}  
 + \frac{ 2 }{\sqrt{\lambda}} 
 \Big[
 (1-b) \sinh \frac{1-b}{2\sqrt{\lambda}}   - 2 \sqrt{\lambda} \cosh \frac{1-b}{2\sqrt{\lambda}}  
 \Big]
\\
& = &  \frac{ 2 }{\sqrt{\lambda}} 
 \Big[
 \sinh \frac{1-b}{2\sqrt{\lambda}}   - 2 \sqrt{\lambda} \cosh \frac{1-b}{2\sqrt{\lambda}}  
 \Big].
\EEAS
Thus
\BEAS
   \frac{1}{2} & = &  \frac{1}{2 \sqrt{\lambda} } \sinh \frac{1-b}{2\sqrt{\lambda}}  + \cosh \frac{1-b}{2\sqrt{\lambda}}   +
   2\lambda A \Big[ \frac{1}{2\sqrt{\lambda}} \sinh  \frac {1}{2 \sqrt{\lambda}} +    \cosh \frac {1}{2 \sqrt{\lambda}} \Big]\\
     \frac{1}{2} & = & 
      \frac{ 1 }{2 \sqrt{\lambda}} 
 \sinh \frac{1-b}{2\sqrt{\lambda}}   
    + B  \sqrt{\lambda} \cosh  \frac {1}{2 \sqrt{\lambda}},
  \EEAS
  which allows to solve for $A$ and $B$.
Moreover
\BEAS
\int_{-1}^1 f(x) g(x) dx
& = & \int_{-1}^1 g(x) \big[ f_0(x) + A \cosh \frac {x}{2 \sqrt{\lambda}}+B \sinh \frac {x}{2 \sqrt{\lambda}} \big]dx\\
& = & 
 \frac{ 2 }{  \sqrt{\lambda}} 
 \sinh \frac{1-b}{2\sqrt{\lambda}}    
 +  2A \sqrt{\lambda} \big[ \sinh \frac {1}{2 \sqrt{\lambda}}
 -\sinh \frac {b}{2 \sqrt{\lambda}}
 \big]  
 +  2B \sqrt{\lambda} \big[ \cosh \frac {1}{2 \sqrt{\lambda}}
 -\cosh \frac {b}{2 \sqrt{\lambda}}
 \big]
\\
& = & 
 \frac{ 2 }{  \sqrt{\lambda}} 
 \sinh \frac{1-b}{2\sqrt{\lambda}}    
 +  \frac{1}{\sqrt{\lambda}}
 \frac{ 
 \big[  \frac{1}{2} -  \frac{1}{2 \sqrt{\lambda} } \sinh \frac{1-b}{2\sqrt{\lambda}}  - \cosh \frac{1-b}{2\sqrt{\lambda}}   \big]}{ \frac{1}{2\sqrt{\lambda}} \sinh  \frac {1}{2 \sqrt{\lambda}} +    \cosh \frac {1}{2 \sqrt{\lambda}} }  \big[ \sinh \frac {1}{2 \sqrt{\lambda}}
 -\sinh \frac {b}{2 \sqrt{\lambda}}
 \big]   \\
 & & 
 +   \frac{1 -  \frac{ 1 }{ \sqrt{\lambda}} 
 \sinh \frac{1-b}{2\sqrt{\lambda}}   }{\cosh \frac {1}{2 \sqrt{\lambda}}} \big[ \cosh \frac {1}{2 \sqrt{\lambda}}
 -\cosh \frac {b}{2 \sqrt{\lambda}}
 \big],
\EEAS
which is our desired quantity (multiplied by 2). This quantity is maximized at $b=0$, for which we have the value:
\BEAS
 &&\frac{ 2 }{  \sqrt{\lambda}} 
 \sinh \frac{1}{2\sqrt{\lambda}}    
 +  \frac{1}{\sqrt{\lambda}}
 \frac{ 
 \big[  \frac{1}{2} -  \frac{1}{2 \sqrt{\lambda} } \sinh \frac{1}{2\sqrt{\lambda}}  - \cosh \frac{1}{2\sqrt{\lambda}}   \big]}{ \frac{1}{2\sqrt{\lambda}} \sinh  \frac {1}{2 \sqrt{\lambda}} +    \cosh \frac {1}{2 \sqrt{\lambda}} }  \big[ \sinh \frac {1}{2 \sqrt{\lambda}}
  \big]   
 +   \frac{1 -  \frac{ 1 }{ \sqrt{\lambda}} 
 \sinh \frac{1}{2\sqrt{\lambda}}   }{\cosh \frac {1}{2 \sqrt{\lambda}}} \big[ \cosh \frac {1}{2 \sqrt{\lambda}}
 -1
 \big] \\
 & = & 
    \frac{1}{\sqrt{\lambda}}
 \frac{ 
 \big[  \frac{1}{2} +  \frac{1}{2 \sqrt{\lambda} } \sinh \frac{1}{2\sqrt{\lambda}}  - \cosh \frac{1}{2\sqrt{\lambda}}   \big]}{ \frac{1}{2\sqrt{\lambda}} \sinh  \frac {1}{2 \sqrt{\lambda}} +    \cosh \frac {1}{2 \sqrt{\lambda}} }  \big[ \sinh \frac {1}{2 \sqrt{\lambda}}
  \big]   
 +   \big[  {1 -  \frac{ 1 }{ \sqrt{\lambda}} 
 \sinh \frac{1}{2\sqrt{\lambda}}   } \big] \cdot \big[ 1 - \frac{1}{\cosh \frac {1}{2 \sqrt{\lambda}}}
 \big] \\
& = & 
\Big[ 1 + \frac{1/2}{ \frac{1}{2\sqrt{\lambda}} \sinh  \frac {1}{2 \sqrt{\lambda}} +    \cosh \frac {1}{2 \sqrt{\lambda}} }  \Big]
\frac{1}{\sqrt{\lambda}}   \sinh \frac {1}{2 \sqrt{\lambda}}
 +   \big[  {1 -  \frac{ 1 }{ \sqrt{\lambda}} 
 \sinh \frac{1}{2\sqrt{\lambda}}   } \big] \cdot \big[ 1 - \frac{1}{\cosh \frac {1}{2 \sqrt{\lambda}}}
 \big] \\
& = & 
 \frac{\frac{1}{2\sqrt{\lambda}}   \sinh \frac {1}{2 \sqrt{\lambda}}
}{ \frac{1}{2\sqrt{\lambda}} \sinh  \frac {1}{2 \sqrt{\lambda}} +    \cosh \frac {1}{2 \sqrt{\lambda}} }   
 +     \big[ 1 - \frac{1}{\cosh \frac {1}{2 \sqrt{\lambda}}}
 \big] 
 +     \frac{ 1 }{ \sqrt{\lambda}} 
  \frac{\sinh \frac{1}{2\sqrt{\lambda}} }{\cosh \frac {1}{2 \sqrt{\lambda}}}.
\EEAS
The maximal leverage score has thus order $ \frac{1}{2\sqrt{\lambda}}$.

\subsection{Fourier feature}
We consider $g(x) = e^{i\omega x}$ so that we can obtain both $\cos \omega x$ and $\sin \omega x$. Then we can take
$f_0(x) = \frac{\omega^2}{\lambda \omega^2 + \frac{1}{4}} e^{i\omega x}$ as a special solution,
since
$$
\lambda f_0''(x) - \frac{1}{4} f_0(x) = \omega^2 \frac{-  \lambda \omega^2 - \frac{1}{4} }  {\lambda \omega^2 + \frac{1}{4}}e^{i\omega x} = g''(x).
$$
We get, from \eq{gpg} and \eq{gmg}:
\BEAS
   2 \cos \omega & = &  \frac{\omega^2}{4\lambda \omega^2+1}
   \frac{1}{i\omega} 2 i \sin \omega 
    + 
   \frac{\lambda \omega^2}{\lambda \omega^2 + \frac{1}{4}} 2 \cos \omega +
   A \Big[ \sqrt{\lambda} \sinh  \frac {1}{2 \sqrt{\lambda}} + 2 \lambda  \cosh \frac {1}{2 \sqrt{\lambda}} \Big]\\
    2i \sin \omega & = &  \frac{\omega^2}{4\lambda \omega^2+1}\Big[ \frac{1}{\omega^2} e^{i\omega x} ( 1 - i \omega x)  \Big]_{-1} ^1   +   \frac{\lambda \omega^2}{\lambda \omega^2 + \frac{1}{4}} 2 i \sin \omega
    + B  \sqrt{\lambda} \cosh  \frac {1}{2 \sqrt{\lambda}}.
  \EEAS
This leads to explicit formulas for the constants $A$ and $B$:
\BEAS
    \frac{2\cos \omega - 2 \omega \sin\omega  }{4\lambda \omega^2+1}  & = &      
   A \Big[ \sqrt{\lambda} \sinh  \frac {1}{2 \sqrt{\lambda}} + 2 \lambda  \cosh \frac {1}{2 \sqrt{\lambda}} \Big]\\
       \frac{2 i \sin \omega }{4\lambda \omega^2+1}  & = &  \frac{1}{4\lambda \omega^2+1}\Big[ 
       2i \sin \omega    - 2i \omega \cos \omega  \Big] 
    + B  \sqrt{\lambda} \cosh  \frac {1}{2 \sqrt{\lambda}} , \mbox{ leading to} \\
         \frac{2 i \omega \cos \omega }{4\lambda \omega^2+1}  & = &   B  \sqrt{\lambda} \cosh  \frac {1}{2 \sqrt{\lambda}}.  
  \EEAS
  We then get
  \BEAS
  A& = &  \frac{2\cos \omega - 2 \omega \sin\omega  }{4\lambda \omega^2+1} \frac{1}{ \sqrt{\lambda} \sinh  \frac {1}{2 \sqrt{\lambda}} + 2 \lambda  \cosh \frac {1}{2 \sqrt{\lambda}}}\\
  \frac{B}{i} & = &  \frac{2  \omega \cos \omega }{4\lambda \omega^2+1} \frac{1}{ \sqrt{\lambda} \cosh  \frac {1}{2 \sqrt{\lambda}}} .
  \EEAS
Thus, the solution for $g(x)= \cos \omega x$ is
$ \ds f(x) =  \frac{\omega^2}{\lambda \omega^2 + \frac{1}{4}}  \cos \omega x + A \cosh \frac{x}{2\sqrt{\lambda}}$,
while the solution for $g(x) = \sin \omega x$
is
$ \ds f(x) =  \frac{\omega^2}{\lambda \omega^2 + \frac{1}{4}}  \sin \omega x + \frac{B}{i} \sinh \frac{x}{2\sqrt{\lambda}}$.

Thus, we can compute for $g(x)= \cos \omega x$
\BEAS
\int_{-1}^1 f(x) g(x) dx
& = & \int_{-1}^1 \cos \omega x \Big[ \frac{\omega^2}{\lambda \omega^2 + \frac{1}{4}}  \cos \omega x + A \cosh \frac{x}{2\sqrt{\lambda}}\Big]dx\\
& = & \frac{\omega^2}{\lambda \omega^2 + \frac{1}{4}}
\Big( 1 +  \frac{1}{2} \frac{\sin \omega}{\omega}   \Big)
+ A
\int_{-1}^1 \cos \omega x    \cosh \frac{x}{2\sqrt{\lambda}} dx\\
& = & \frac{\omega^2}{\lambda \omega^2 + \frac{1}{4}}
\Big( 1 +  \frac{1}{2} \frac{\sin \omega}{\omega}   \Big)
+ \frac{2 A}{\omega^2 + \frac{1}{4\lambda}}
\Big[ 
\frac{1}{2\sqrt{\lambda}}\cos \omega  \sinh 
\frac{1}{2\sqrt{\lambda}} + \omega \sin \omega  \cosh 
\frac{1}{2\sqrt{\lambda}}
\Big].
\EEAS
For $g(x) = \sin \omega x$, we get:
\BEAS
\int_{-1}^1 f(x) g(x) dx
& = & \int_{-1}^1 \sin \omega x \Big[ \frac{\omega^2}{\lambda \omega^2 + \frac{1}{4}}  \sin \omega x + \frac{B}{i} \sinh \frac{x}{2\sqrt{\lambda}}\Big]dx\\
& = & \frac{\omega^2}{\lambda \omega^2 + \frac{1}{4}}
\Big(1  - \frac{1}{2} \frac{\sin \omega}{\omega}   \Big)
+ \frac{B}{i}
\int_{-1}^1 \sin \omega x    \sinh \frac{x}{2\sqrt{\lambda}} dx\\
& = & \frac{\omega^2}{\lambda \omega^2 + \frac{1}{4}}
\Big(  1 -  \frac{1}{2} \frac{\sin \omega}{\omega}   \Big)
+ \frac{2B/i}{\omega^2 + \frac{1}{4\lambda}}
\Big[ 
\frac{1}{2\sqrt{\lambda}}\sin \omega  \cosh 
\frac{1}{2\sqrt{\lambda}} - \omega \cos \omega  \sinh 
\frac{1}{2\sqrt{\lambda}}
\Big].
\EEAS
We thus obtain the two leverage scores (divided by 2). We notice that the two leverage scores tend to $1/(2\lambda)$ for $\omega$ tending to infinity, which is the largest value for all $\omega$.
 
\subsection{Empirical comparisons} 
As detailed in \cite[Appendix A]{pauwels2018relating}, we can estimate the leverage scores from a grid in $[-1,1]$ with $n$ points by computing $\sum_{i,j=1}^n \varphi(x_i,v) \varphi(x_j,v) \big[( K + n \lambda \idm)^{-1} \big]_{ij}$, and compare with the theoretical expression found above, which match. See \myfig{levscore}.

\begin{figure}[h]
\begin{center}
\includegraphics[scale=.4]{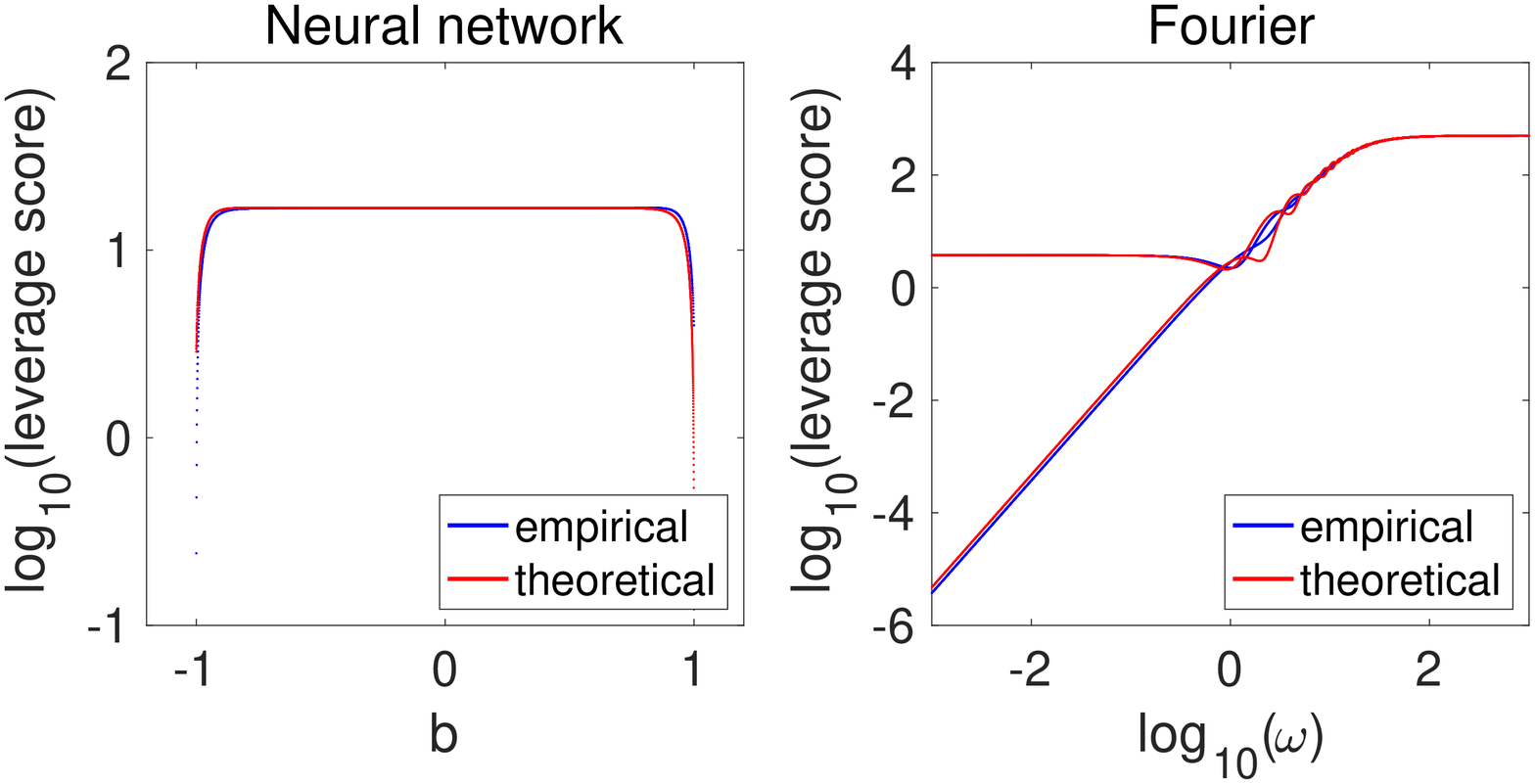}
\end{center}

\vspace*{-.5cm}

\caption{Comparison of empirical and theoretical leverage scores for neural network feature (left) and Fourier features (right). We used $\lambda = 10^{-3}$ and $n = 4096$.
\label{fig:levscore}}
\end{figure}

    \bibliography{splines}
 
  \end{document}